\pgfplotsset{
    x tick style={color=black},
    y tick style={color=black}
}
\renewcommand{\labelenumi}{(\alph{enumi})}
\renewcommand\theenumi\labelenumi
\renewcommand{\labelenumi}{\theenumi}
\renewcommand{\theenumi}{(\roman{enumi})}
\newtheorem{theorem}{Theorem}
\newtheorem{lemma}[theorem]{Lemma}
\newcommand{\oea}{\mbox{$(1 + 1)$~EA}\xspace}
\newcommand{\oplea}{\mbox{$(1+\lambda)$~EA}\xspace}
\newcommand{\mpoea}{\mbox{$(\mu+1)$~EA}\xspace}
\newcommand{\mplea}{\mbox{$(\mu+\lambda)$~EA}\xspace}
\newcommand{\mclea}{\mbox{${(\mu,\lambda)}$~EA}\xspace}
\newcommand{\oclea}{\mbox{$(1,\lambda)$~EA}\xspace}
\newcommand{\om}{\textsc{OneMax}\xspace}
\newcommand{\omt}{\textsc{OneMax$_{(0,1^n)}$}\xspace}
\newcommand{\cliff}{\textsc{Cliff}\xspace}
\newcommand{\jump}{\textsc{Jump}\xspace}
\DeclareMathOperator{\Poly}{Poly}
\newcommand{\R}{\ensuremath{\mathbb{R}}}
\newcommand{\N}{\ensuremath{\mathbb{N}}} 
\newcommand{\Z}{\ensuremath{\mathbb{Z}}}
\newcommand{\etalchar}[1]{$^{#1}$}
\begin{document}
{\sloppy
\title{When Non-Elitism Meets Time-Linkage Problems}

\author[a,b]{Weijie Zheng}
\author[a]{Qiaozhi Zhang}
\author[b]{Huanhuan Chen}
\author[a,c]{Xin Yao\thanks{Corresponding author}}
\affil[a]{Guangdong Provincial Key Laboratory of Brain-inspired Intelligent Computation, Department of Computer Science and Engineering, Southern University of Science and Technology, Shenzhen, China}
\affil[b]{School of Computer Science and Technology, University of Science and Technology of China, Hefei, China}
\affil[c]{CERCIA, School of Computer Science, University of Birmingham, Birmingham, United Kingdom}



\maketitle
\begin{abstract}
Many real-world applications have the time-linkage property, and the only theoretical analysis is recently given by Zheng, et al. (TEVC 2021) on their proposed time-linkage \om problem, \omt. However, only two elitist algorithms \oea and \mpoea are analyzed, and it is unknown whether the non-elitism mechanism could help to escape the local optima existed in \omt. In general, there are few theoretical results on the benefits of the non-elitism in evolutionary algorithms.
In this work, we analyze on the influence of the non-elitism via comparing the performance of the elitist \oplea and its non-elitist counterpart \oclea. We prove that with probability $1-o(1)$ \oplea will get stuck in the local optima and cannot find the global optimum, but with probability $1$, \oclea can reach the global optimum and its expected runtime is $O(n^{3+c}\log n)$ with $\lambda=c \log_{\frac{e}{e-1}} n$ for the constant $c\ge 1$. 
Noting that a smaller offspring size is helpful for escaping from the local optima, we further resort to the compact genetic algorithm where only two individuals are sampled to update the probabilistic model, and prove its expected runtime of $O(n^3\log n)$. Our computational experiments also verify the efficiency of the two non-elitist algorithms.
\end{abstract}

%
%



\maketitle

\section{Introduction}
The \emph{time-linkage} property, where the objective value depends on not only the current solution but also the historical ones, was first introduced to the evolutionary optimization community by Bosman~\cite{Bosman05}. Many real-world optimization problems have this property~\cite{Nguyen11}\footnote{Although the title of this literature is for the continuous optimization, its survey contains both continuous and discrete time-linkage real-world optimization problems.}, however, the theoretical analysis is quite few. Recently, Zheng, et al.~\cite{ZhengCY21} designed a time-linkage \om problem, \omt, which is based on the \om problem for the current time step but with an opposite preference of the first dimension of the immediate previous time step. The algorithms they discussed assume that the fitness is always computed based on the current stored individual, not the last evaluated individual. They proved that with probability of $1-o(1)$, the \oea will get stuck in one of two possible local optima, and cannot reach the global optimum. They also proved that introducing a not so small parent population size can significantly improve the probability of reaching the global optimum to $1-o(1)$, and condition on an event that happens with $1-o(1)$ probability, \mpoea can reach the global optimum in expected $O(n^2)$ fitness evaluations. 

Given the analysis in~\cite{ZhengCY21}, we could arrive at that the essential reason that \oea cannot leave the local optima once it reaches them stems from searching in the space $\{0,1\}^n$ but expecting reaching the optimum in the space of $\{0,1\}^{n+1}$ when encoding the solution in an $n$-dimensional bit string, storing the previous solution, and solving the \omt problem in the offline mode. In more details, putting the first bit position value space together with the current search space $\{0,1\}^n$ is indeed the $\{0,1\}^{n+1}$ space, and the aim is to find $(0(1^n))$. For the \omt problem, since the fitness of $\{0\}\times\{1\}\times\{0,1\}^{n-1}$ is always greater than the fitness of $\{1\}\times\{0,1\}^n$, when $(0(1*)), * \in\{0,1\}^{n-1}\setminus \{1^{n-1}\}$ is reached, any offspring generated from this point is in $\{1\}\times\{0,1\}^n$, and thus cannot enter into the population if the elitist selection is used, and the stagnation happens. Since $(1(1^n))$ is the local optimum for $\{1\}\times\{0,1\}^n$, if it is reached, any offspring generated from this point is in $\{1\}\times\{0,1\}^n$, and thus also cannot enter into the population if the elitist selection is used, and the stagnation happens. \cite{ZhengCY21} proved a $1-o(1)$ probability of getting stuck for the \oea, and introducing a not small parent population can reduce the stagnation probability to $o(1)$.

We note that some prediction on the future fitness might help to tackle the difficulty caused by the time-linkage property~\cite{Bosman05}, but similar to~\cite{ZhengCY21}, this work only focuses on how the basic evolutionary algorithms react to the time-linkage property without such advanced handling on the fitness. In this work, we first proved that introducing a non-trivial offspring population does not help to avoid reaching the local optima that could result in the stagnation for the elitist algorithms. More specifically, we proved that similar to \oea, \oplea with $1-o(1)$ probability will get stuck in one of the two possible local optima, see Theorem~\ref{thm:oplea}. Using the language we summarized above, we could give a more intuitive explanation here on why the introduction of the offspring population cannot help to reach the global optimum. The offspring population will result in a stronger selection pressure to make the first bit with value 1 than the \oea. When starting from the space of $\{1,0\}\times\{0*\}, *\in\{0,1\}^{n-1}$, the process with a high probability falls into $(0(1*)), * \in\{0,1\}^{n-1}\setminus \{1^{n-1}\}$  if the population size is not overly large, and when starting from $\{1\}\times\{1*\},*\in\{0,1\}^{n-1}$ the process might move to $\{1\}\times\{0*\}, *\in\{0,1\}^{n-1}$ when there are sufficient $0$s in the search point and eventually falls into $(0(1*)), * \in\{0,1\}^{n-1}\setminus \{1^{n-1}\}$  with a high probability, or might stay in $\{1\}\times\{1\}\times\{0,1\}^{n-1}$ and finally reach the $(1(1^n))$ with a high probability. 

As non-elitism is often regarded as a mechanism of escaping from local optima although with little theoretical evidence (see Section~\ref{subsec:nonelitism} for the literature review), we considered the situation for solving the time-linkage \omt. As a comparison to the \oplea, we pointed out that the \oclea can reach the global optimum with probability 1. It can be intuitively explained for the \oclea, and also for other non-elitist algorithms. Since the non-elitist algorithms can accept the inferior solutions to the best-so-far one, they could have a positive probability of moving from the current $\{a\}\times\{0,1\}^n, a \in \{0,1\}$ to any subspace $\{b\}\times\{0,1\}^n, b\in\{0,1\}$, and thus could reach the optimum of the space $\{0,1\}\times\{0,1\}^n$. We further proved that the expected runtime of the \oclea is $O(n^{3+c}\log n)$ when we take the offspring population size $\lambda=c\log_{\frac{e}{e-1}} n$ for any constant $c\ge 1$, see Theorem~\ref{thm:oclea}. Beyond the comma selection, noting that the less generated mutants, the larger probability to leave the local optimum once reached, we resorted to a more general non-elitist algorithm, the compact genetic algorithm (cGA) in which only two individuals are sampled to update the probabilistic model, and proved that it could reach the optimum of the \omt in expected $O(n^{2.5}\mu)$ runtime for the population size $\mu \in \Omega(\sqrt n \log n) \cap \Poly(n)$, see Theorem~\ref{thm:cga}. Our computational experiments also confirm the efficiency of the \oclea and the cGA, especially the superiority of the cGA, together with the non-convergence to the optimum of the \oplea.

The results that the non-elitism helps in the time-linkage problem is interesting and could be regarded a scenario for the advantage of the non-elitism as there is a long discussion in the evolutionary computation theory about the runtime benefit of the non-elitism, see the survey in~\cite{Doerr20} and more details in Section~\ref{subsec:nonelitism}. 

The reminder of this paper is organized as follows. The time-linkage \omt and the literature review for the non-elitist theory are introduced in Section~\ref{sec:pre}. Section~\ref{sec:oplea} discusses the performance of the elitist \oplea, and Sections~\ref{sec:oclea} and~\ref{sec:beyond} show the benefits of the non-elitist \oclea, and the cGA respectively. Our experimental verification is given in Section~\ref{sec:exp}, and Section~\ref{sec:con} concludes this work.

\section{Preliminaries}
\label{sec:pre}
\subsection{\omt}\label{subsec:omt}

\subsubsection{\omt}
The time-linkage problem is the category of the optimization problems in which the objective function relies not only on the current solution, but also the historical ones~\cite{Bosman05}. In~\cite{ZhengCY21}, the general time-linkage pseudo-Boolean problem $F:\{0,1\}^n$  $\times\dots\times\{0,1\}^n \rightarrow \R$ is defined by
\begin{align}
F(x^{ {t_0}},\dots,x^{ {t_0+\ell}})=\sum_{t=0}^{\ell} F_t(x^{ {t_0+t}};x^{ {t_0}},\dots,x^{ {t_0+t-1}})
\label{eq:F}
\end{align}
for consecutive $x^{ {t_0}},x^{ {t_0+1}},\dots,x^{ {t_0+\ell}}$ where $\ell \in \N$ and could be infinite, and $x^{ {t_0+t}}$ in $F_t$ is separated by a semicolon to indicate that it is the solution for the current time and others are the time-linkage historical solutions. To simplify the analysis and to aggressively (with overwhelming opposite weight for the immediate previous solution) show the difficulty that the time-linkage property could bring, Zheng, et al.~\cite{ZhengCY21} proposed the time-linkage version of the well-analyzed \om problem, the \omt problem, which specifies (\ref{eq:F}) by $\ell=1$, $F_0(x^{ {t_0}})=-nx_1^{ {t_0}}$, and $F_1(x^{ {t_0+1}};x^{ {t_0}})=\sum_{i=1}^n x_i^{ {t_0+1}}$. The formal maximization of the \omt problem $f:\{0,1\}\times\{0,1\}^n \rightarrow \Z$ is defined by
\begin{align}
f(x^{ {t-1}},x^{ {t}})=\sum_{i=1}^n x_i^{ {t}}-nx_1^{ {t-1}}
\label{eq:oms}
\end{align}
for two consecutive $x^{ {t-1}}=(x_1^{ {t-1}},\dots,x_n^{ {t-1}})$ and $x^{ {t}}=(x_1^{ {t}},\dots,x_n^{ {t}}) \in \{0,1\}^n$. Clearly, when we maximize the problem, the first bit position in the last time step prefers the $0$, and it is opposite to the preference of $1$ for the current first bit position, which might cause the possible difficulties for the optimization. Note that although~\cite{ZhengCY21} discussed solving the time-linkage problem in both offline and online modes, the online mode they discussed is just for the \oea and somehow reused the analysis for the offline mode, see more details in~\cite{ZhengCY21}. Hence, in the reminder of this paper, we just restrict ourselves to consider the $n$-bit string encoding for the offline mode, that is, we encode the $n$-bit string for the current solution that could be evolved by the variation (mutation) operator and store the solution of the immediate previous time for the time-linkage fitness evaluation. The global optimum for the \omt problem is the current solution being $(1^n)$ condition on that the first bit value of the immediate previous time is $0$. Since for such encoding in the offline mode we consider, the \emph{time} in the time-linkage problem is identical to the \emph{generation} for the evolutionary algorithms, we will not distinguish them in this paper.

\subsubsection{Local Optima Issue}
For the \omt problem, we only encode the current solution in an $n$-bit string, and expect to find $(1^n)$ at the current time step and require the first bit value $0$ of the immediate previous time, that is, we expect an optimum in $\{0,1\}^{n+1}$ space but the search process only happens in the subspace $\{0,1\}^n$. To help our understanding, we show a small example for the \oea on the \omt problem. Let's put the first bit value of the immediate previous time together with the current solution, that is, we say the whole space is $\{0,1\}^{n+1}$ where the first dimension is for the first bit value of the immediate previous time, and the optimum is $(0(1^n))$ where the pair of the brackets in $(1^n)$ is just a notation to indicate these dimensions are for the current solution. Assume in a certain step of the \oea, we start from a search point $X=(10^{n-1})$, the search (variation) only happens in the $n$-dimensional subspace of $\{0,1\}^{n+1}$, that is $\{1\} \times \{0,1\}^n$. 

As shown in~\cite{ZhengCY21}, the \oea will encounter the following two stagnation cases when the algorithm gets stuck and cannot escape.
\begin{lemma}[Lemma 3 in~\cite{ZhengCY21}]
Let $X^0,X^1,\cdots$ be the solution sequence generated by the \oea on the \omt problem. Denote 
\begin{itemize}
\item \emph{Event \textrm{I}}: There is a $g_0 \in \N$ such that $(X_{1}^{g_0-1},X_{1}^{g_0})=(0,1)$ and $X_{[2..n]}^{g_0} \neq 1^{n-1}$, 
\item \emph{Event \textrm{II}}: There is a $g_0 \in \N$ such that $(X_{1}^{g_0-1},X^{g_0})=(1,1^n)$. 
\end{itemize}
Then if at a certain generation among the solution sequence, \emph{Event \textrm{I}} or \emph{Event \textrm{II}} happens, then \oea cannot find the optimum of OneMax$_{(0,1^n)}$ in an arbitrary long runtime afterwards.
\label{lem:stuck}
\end{lemma}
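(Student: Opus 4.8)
The plan is to show that under either event the current individual of the \oea freezes forever at a non-optimal configuration, simply because every possible offspring is rejected. Recall the dynamics of the \oea on \omt in the offline mode: it stores a current individual $x\in\{0,1\}^n$ together with the time-linkage fitness value $\phi$ it received when it was last accepted, namely $\phi=f(z,x)=\sum_{i=1}^n x_i-nz_1$ where $z$ was the then-stored individual. In one generation it creates $y$ by standard bit-wise mutation of $x$, evaluates $f(x,y)=\sum_{i=1}^n y_i-nx_1$ (the stored individual $x$ plays the role of the time-linkage predecessor), and replaces $(x,\phi)$ by $(y,f(x,y))$ if and only if $f(x,y)\ge\phi$. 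In the lemma's notation $X^{g}$ is the current individual after generation $g$, and writing $\phi^{g}$ for its stored value we have $\phi^{g}=\sum_{i=1}^n X_i^{g}-nX_1^{g-1}$ whenever $X^{g}\neq X^{g-1}$, and $\phi^{g}=\phi^{g-1}$ otherwise.

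For Event~\textrm{I}: since $X_1^{g_0-1}=0\neq 1=X_1^{g_0}$ we have $X^{g_0}\neq X^{g_0-1}$, so $X^{g_0}$ was freshly accepted and $\phi^{g_0}=\sum_{i=1}^n X_i^{g_0}-n\cdot 0=\sum_{i=1}^n X_i^{g_0}\ge 1$ because $X_1^{g_0}=1$. For any mutant $y$ of $X^{g_0}$ one has $f(X^{g_0},y)=\sum_{i=1}^n y_i-nX_1^{g_0}=\sum_{i=1}^n y_i-n\le 0<1\le\phi^{g_0}$, so $y$ is rejected; hence $X^{g_0+1}=X^{g_0}$ and $\phi^{g_0+1}=\phi^{g_0}$, and a trivial induction yields $X^{g}=X^{g_0}$, $\phi^{g}=\phi^{g_0}$ for all $g\ge g_0$. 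Since $X_{[2..n]}^{g_0}\neq 1^{n-1}$ the frozen current individual is never $1^n$, and every evaluated mutant is paired with the predecessor $X^{g_0}$ whose first bit is $1$; thus after generation $g_0$ neither the stored state nor any evaluated pair equals the optimum $(0(1^n))$ of \omt.

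For Event~\textrm{II}: here $X^{g_0}=1^n$ and $X_1^{g_0-1}=1$, and one first pins down that $\phi^{g_0}=0$. The value $1^n$ was accepted at some generation $g_1\le g_0$ against a predecessor $z$, so at that moment $\phi=n-nz_1$; the alternative $z_1=0$ is the optimum $(0(1^n))$ itself, which from $g_1$ on can never be left because every mutant of $1^n$ has \omt-value at most $0<n$ — so in the genuine Event~\textrm{II} case $z_1=1$, $\phi=0$, and this value survives all later rejections and all $1^n\to 1^n$ re-acceptances, giving $\phi^{g_0}=0$. Now for any mutant $y$ of $1^n$ we have $f(1^n,y)=\sum_{i=1}^n y_i-n\le 0=\phi^{g_0}$, with equality only for $y=1^n$; hence the only accepted mutant is $1^n$ itself and it keeps the stored value $0$, so by induction $X^{g}=1^n$ and $\phi^{g}=0$ for all $g\ge g_0$. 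The stored predecessor then always has first bit $1$ and, exactly as before, no evaluated pair after $g_0$ is the optimum $(0(1^n))$.

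The step I expect to be the main obstacle is the bookkeeping of the stored value $\phi$ in Event~\textrm{II}: making precise that once $X^{g_0}=1^n$ is reached from a first-bit-$1$ parent the recorded value is $0$ and remains $0$ (through both rejections and $1^n\to 1^n$ re-acceptances), and cleanly separating this from the benign case where $1^n$ was reached from a first-bit-$0$ parent, in which the optimum has in fact been found and is then never lost. Everything else reduces to the one-line comparison $\sum_{i=1}^n y_i-nX_1\le 0\le\phi$ together with the induction on $g$.
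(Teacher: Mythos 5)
Your proposal is correct and follows essentially the same argument the paper relies on (the lemma is quoted from~\cite{ZhengCY21}, and the paper's accompanying explanation is exactly this elitist-rejection argument: under Event~I the stored value is at least $1$ while every offspring is evaluated with predecessor first bit $1$ and hence has value at most $0$, and under Event~II the current pair already attains the maximal value $0$ of the subspace $\{1\}\times\{0,1\}^n$, so only $1^n$ itself can be re-accepted). Your extra bookkeeping for $\phi^{g_0}$ in Event~II is harmless but unnecessary, since the algorithm's stored comparison value is by construction $f(X^{g_0-1},X^{g_0})=n-n\cdot 1=0$ directly from the event's definition.
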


Here we give a straightforward explanation of the local optima with the above language of searching in $\{0,1\}^n$ with expectation of finding the optimum in $\{0,1\}^{n+1}$. \emph{Event \textrm{I}} corresponds to $(0(1*))$ where $*\in\{0,1\}^{n-1}$ and $*\ne 1^{n-1}$. Then the search space is $\{1\}\times\{0,1\}^n$ as any variation starting from $X^{g_0}$ will have $X_1^{g_0}=1$ as the stored first bit value of the immediate previous time for the \omt fitness evaluation. However, the optimal solution in $\{1\}\times\{0,1\}^n$ is $(1(1^n))$ and the corresponding \omt value is $0$, which is less than the \omt value of at least $1$ for $(0(1*))$. Thus $(0(1*))$ is a local optimum.

\emph{Event \textrm{II}} corresponds to $(1(1^n))$. It is a local optimum since the search space is $\{1\}\times\{0,1\}^n$ and itself is the optimum of such a subspace.

With the above language, we could describe the results shown in~\cite{ZhengCY21}. The \oea will get stuck in one of the two local optima with probability of $1-o(1)$ w.r.t. the problem size $n$. For the \mpoea, one the one hand, the $\{1\}\times\{0,1\}^n$ search space will diminish due to its poor fitness value. On the other hand, the not small parent population size can avoid the local optima $(0(1*))$ taking over the whole population with $1-o(1)$ probability, thus it can find the global optimum with a high probability.

For ease of discussion,  we will call $(0(1*))$ and $(1(1^n))$ the two local optima corresponding to \emph{Event \textrm{I}} and \emph{Event \textrm{II}}, even when they are not actual local optima for some of the algorithms discussed in this paper.

\subsection{Literature Review on Non-Elitist Theory}\label{subsec:nonelitism}
In comparison to the prosperous theoretical work on the elitist evolutionary algorithms, the theory for the non-elitist evolutionary algorithms has been few. Recently, in his work about the classic non-elitist \mclea on the \jump function, Doerr~\cite{Doerr20} conducted a thorough survey on the theory for the classic non-elitist evolutionary algorithms, such as \oclea and \mclea. He divided the non-elitist theoretical work into three categories.
\begin{itemize}
\item When the selection pressure is low, the classic non-elitist evolutionary algorithms need exponential runtime. See the literatures mentioned in~\cite{Doerr20}. This category points the negative evidence of the classic non-elitist evolutionary algorithms.
\item When the selection pressure is high, the classic non-elitist evolutionary algorithm behaves similarly to its elitist counterpart and is essentially a pseudo-elitist algorithm. See the literatures mentioned in~\cite{Doerr20}. This category suggests the similar performance of the classic non-elitist evolutionary algorithms to their elitist counterparts.
\item Only two works~\cite{GarnierKS99,JagerskupperS07} show the examples for the benefit of the non-elitism. Garnier, et al.~\cite{GarnierKS99} proved that for a deceptive function the runtime for the $(1,1)$~EA is $O(2^n)$ while the elitist algorithms typically require $n^{\Theta(n)}$. As pointed in~\cite{Doerr20}, this example from~\cite{GarnierKS99} is an extreme case since $(1,1)$~EA actually performs a random walk and has no selection. A truly non-trivial example is given by Jagerskupper and Storch in~\cite{JagerskupperS07}. They proved that for the \cliff function with length $\frac n3$, the optimization time of the \oclea with $\lambda \ge 5\ln n$ is $e^{5\lambda}$ while it is at least $n^{n/4}$ for the \oplea with any offspring size $\lambda$.
\end{itemize}
From the above literature review, we could easily see that there is few work~\cite{GarnierKS99,JagerskupperS07} supporting the strength of the non-elitism mechanism. Our result of $O(n^{3+c}\log n)$ for the expected runtime of the \oclea with $\lambda=c\log_{\frac{e}{e-1}}n$ for constant $c\ge 1$ on the \omt (Theorem~\ref{thm:oclea}), comparing with that its elitlist counterpart \oplea cannot reach the optimum with probability of $1-o(1)$ (Theorem~\ref{thm:oplea}), will give one such positive evidence for the possible benefits of the classic non-elitist evolutionary algorithms.

Besides the classic non-elitist evolutionary algorithms, there are some literatures about other non-elitist randomized search heuristics, such as Metropolis algorithm, strong-selection-weak-mutation, and artificial immune systems, see~\cite{JansenW07,LissovoiOW19,PaixaoHST17,OlivetoPHST18,CorusOY18,CorusOY20}, but since our main contribution of this work is based on the classic evolutionary algorithm, we will not discuss them in details here. 

For one exception, we would like to briefly list some positive results for the estimation-of-distribution algorithms (EDAs)~\cite{Droste06,ChenTCY10,KrejcaW20}, which are more general non-elitist algorithms~\cite{Doerr20}, on the difficult problems, as we will discuss the compact genetic algorithm (cGA), one kind of EDAs, on the \omt problem in Section~\ref{sec:beyond}. For the \om with additive centered Gaussian noise with variance $\sigma^2$, Friedrich, et al.~\cite{FriedrichKKS17} proved a runtime of $O(\mu\sigma^2\sqrt n\log \mu n)$ with high probability for the cGA with population size $\mu =\omega(\sigma^2 \sqrt n \log n)$, while the simple hillclimber with a high probability cannot find the optimum in polynomial time for $\sigma^2>2$ (a similar result holds for the \oea~\cite{GiessenK16}), and with a high probability the \mpoea with $\mu \in \omega(1)\cap\Poly(n)$ cannot find the optimum in polynomial time for $\sigma^2\ge(na)^2$ for some $a\in\omega(1)$. For the well-analyzed multi-modal \jump function with jump size $k= o(n)$, Hasen{\"o}hrl and Sutton~\cite{HasenohrlS18} proved the runtime of $O(\mu n^{1.5} \ln n)$ with a high probability for the cGA with $\mu = \Omega(ne^{4k}+n^{3.5+\epsilon})$ for any small positive $\epsilon$. An improved runtime of $O(n\log n)$ with high probability for $\mu=\sqrt n \ln n$ when $k\le \frac{1}{20} \ln n - 1$ was given in~\cite{Doerr20algo}, while the lower bound of $n^k$ is held for the \oea~\cite{DrosteJW02} and the \mplea and even \mclea~\cite{Doerr20}. For the recently proposed DLB~\cite{LehreN19} function, Doerr and Krejca~\cite{DoerrK20} proved a runtime of $O(n^2\ln n)$ with high probability for the univariate marginal distribution algorithm, one kind of EDAs, with $\mu=\Theta(n\ln n)$, while the classic evolutionary algorithms need $O(n^3)$ in Lehre and Nguyen's work~\cite{LehreN19}.
 
\section{\oplea Cannot Find the Global Optimum}
\label{sec:oplea}
\subsection{\oplea}
The original \oplea was originally designed for the problems without time-linkage property. As discussed in Section~\ref{subsec:omt}, in this paper we consider the $n$-bit string encoding for the offline mode. Similar to the modification to the \oea and \mpoea in~\cite{ZhengCY21}, in order to tackle the time-linkage \omt problem, we encode an $n$-bit string for the current solution, and store the immediate previous solution for the fitness evaluation. The other procedure is identical to the original \oplea. The pseudo-code is shown in Algorithm~\ref{alg:oplEA}. The global optimum in this case is $X^{g^*}=(1^n)$ condition on $X^{g^*-1}_1=0$ for a certain generation $g^*$.
\begin{algorithm}[!ht]
    \caption{\oplea to maximize fitness function $f$ requiring two consecutive time steps}
    {\small
    \begin{algorithmic}[1]
    \STATE {Generate the random initial two generations $X^0=(X_{1}^0,\dots,X_{n}^0)$ and $X^1=(X_{1}^1,\dots,X_{n}^1)$}
    \FOR {$g=1,2,\dots$}
    \STATEx {\quad$\%\%$ \textit{Mutation}}
    \STATE {Independently generate $\tilde{X}^{(1)g}, \dots, \tilde{X}^{(\lambda)g}$, each via independently flipping each bit value of $X^g$ with probability $1/n$}
    \STATEx {\quad$\%\%$ \textit{Selection}}
    \STATE Let $S=\{\tilde{X}^{(i)g} \mid i\in[1..\lambda], \forall j\in[1..\lambda], f(X^{g},\tilde{X}^{(i)g}) \ge f(X^{g},\tilde{X}^{(j)g})\}$, and from $S$ uniformly at random select one element, denoted as $\tilde{X}^g$
    \IF {$f(X^g,\tilde{X}^g) \ge f(X^{g-1},X^g)$}
    \STATE $X^{g+1} = \tilde{X}^g$
    \ELSE
    \STATE $X^{g+1} = X^g$ and $X^{g}=X^{g-1}$.
    \ENDIF
    \ENDFOR
    \end{algorithmic}
    \label{alg:oplEA}
    }
\end{algorithm}

\subsection{Convergence Analysis}
In this subsection, we will analyze the convergence of the \oplea. As discussed in Section~\ref{subsec:omt}, there are two kinds of local optima, one is $(0(1*)), *\in \{0,1\}^{n-1}\setminus \{1^{n-1}\}$ that the first bit value of the immediate previous time becomes $0$ and the first bit value of the current time is $1$ before the optimum is found, the other is $(1(1^n))$ that the first bit value of the immediate previous time becomes $1$ and the current solution is $1^n$. With the same reason discussed in Section~\ref{subsec:omt}, it is not difficult to see that once the \oplea reaches one of the two, the algorithm will get stuck and cannot find the optimum further. In the following, we will show that with high probability, one of the two kinds of local optima will be reached and the algorithm cannot find the optimum.

Firstly, the following lemma estimates the probability of $|\tilde{X}^g| -  |X^g| < \ln \lambda$ condition on the event that $|\tilde{X}^g| > |X^g|$.
\begin{lemma}
Let $\lambda \ge e^e$. Suppose $\tilde{X}^g, g>0$ is generated from $X^g$ via Steps 3-4 in Algorithm~\ref{alg:oplEA}. Let $a$ be the number of zeros in $X^g$. Then we have 
\begin{align*}
\Pr[|\tilde{X}^g| - |X^g|<{\lceil \ln\lambda \rceil} \mid |\tilde{X}^g| > |X^g|] \ge 1-\frac{\lambda(ne+a\lambda)a^{\ln\lambda-1}}{(n\ln\lambda)^{\ln\lambda}}.
\end{align*}
\label{lem:ln}
\end{lemma}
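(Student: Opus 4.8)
The plan is to reduce the statement to a first‑moment (union‑bound) computation. First I would observe that, since $f(X^g,\tilde X^{(i)g})=|\tilde X^{(i)g}|-nX_1^g$ and the subtracted term does not depend on $i$, the offspring $\tilde X^g$ selected in Step~4 of Algorithm~\ref{alg:oplEA} is one of maximum number of ones among $\tilde X^{(1)g},\dots,\tilde X^{(\lambda)g}$. Hence, writing $D_i:=|\tilde X^{(i)g}|-|X^g|$, $D:=\max_{i\in[1..\lambda]}D_i=|\tilde X^g|-|X^g|$, and $k:=\lceil\ln\lambda\rceil$, the claim is exactly $\Pr[D\ge k\mid D>0]\le\frac{\lambda(ne+a\lambda)a^{\ln\lambda-1}}{(n\ln\lambda)^{\ln\lambda}}$. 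Since $k\ge1$ we have $\{D\ge k\}\subseteq\{D>0\}$, so this probability equals $\Pr[D\ge k]/\Pr[D>0]$, and (assuming $a\ge1$, as $a=0$ makes the conditioning event empty) it suffices to bound the numerator from above and the denominator from below.

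For the numerator, a union bound over the $\lambda$ offspring gives $\Pr[D\ge k]\le\lambda\Pr[D_1\ge k]$; a net gain of at least $k$ forces at least $k$ of the $a$ zero‑bits of $X^g$ to flip, and since the number of such flips is $\mathrm{Bin}(a,1/n)$, a further union bound over $k$‑subsets gives $\Pr[D_1\ge k]\le\binom ak n^{-k}\le a^k/(k!\,n^k)$. For the denominator, flipping exactly one zero‑bit and no one‑bit already improves, so $\Pr[D_1>0]\ge\frac an(1-\frac1n)^{n-1}\ge\frac a{en}$; by independence $\Pr[D\le0]=\Pr[D_1\le0]^\lambda\le(1-\frac a{en})^\lambda\le e^{-a\lambda/(en)}$, and using $1-e^{-x}\ge\frac x{1+x}$ one gets $\Pr[D>0]\ge\frac{a\lambda}{ne+a\lambda}$.

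Combining these yields $\Pr[D\ge k\mid D>0]\le\frac{(ne+a\lambda)a^{k-1}}{k!\,n^k}$, so what remains is the purely elementary inequality $\frac{a^{k-1}}{k!\,n^k}\le\frac{\lambda a^{\ln\lambda-1}}{(n\ln\lambda)^{\ln\lambda}}$, equivalently $\left(\frac an\right)^{k-\ln\lambda}\frac{(\ln\lambda)^{\ln\lambda}}{k!}\le\lambda$. The first factor is at most $1$ since $a\le n$ and $k\ge\ln\lambda$. For the second I would use $k!\ge(k/e)^k$ and take logarithms, reducing $\frac{(\ln\lambda)^{\ln\lambda}}{(k/e)^k}\le\lambda$ to $h(k)\le h(\ln\lambda)$ for $h(x)=x-x\ln x$; this holds because $h'(x)=-\ln x<0$ on $(1,\infty)$, $k\ge\ln\lambda$, and $\lambda\ge e^e$ guarantees $\ln\lambda\ge e>1$. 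Taking the complement gives the stated lower bound.

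I expect the main obstacle to be precisely this last step — controlling the effect of the rounding in $k=\lceil\ln\lambda\rceil$ so that a bound written with the non‑integer exponent $\ln\lambda$ still dominates — which is where the monotonicity of $h$ on $(1,\infty)$ and the hypothesis $\lambda\ge e^e$ are used. The probabilistic part is a routine union‑bound argument, the only slightly nonstandard ingredient being the lower bound on $\Pr[D>0]$ via $1-e^{-x}\ge x/(1+x)$.
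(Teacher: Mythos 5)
Your proof is correct and follows essentially the same route as the paper's: a union bound over the $\lambda$ offspring and over $k$-subsets of the $a$ zero-bits for the numerator, the lower bound $\Pr[|\tilde{X}^g|>|X^g|]\ge\frac{a\lambda}{ne+a\lambda}$ for the denominator, and then the ratio. The only difference is cosmetic: the paper disposes of the ceiling right away via $\binom{a}{\lceil\ln\lambda\rceil}n^{-\lceil\ln\lambda\rceil}\le\left(\frac{ea}{n\lceil\ln\lambda\rceil}\right)^{\lceil\ln\lambda\rceil}\le\left(\frac{ea}{n\ln\lambda}\right)^{\ln\lambda}$ (the base being at most $1$ since $\lambda\ge e^e$), whereas you keep $k!$ and move from $\lceil\ln\lambda\rceil$ to $\ln\lambda$ at the end via $k!\ge(k/e)^k$ and the monotonicity of $x-x\ln x$.
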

\begin{proof}
We calculate
\begin{align*}
\Pr[|\tilde{X}^g|& - |X^g|\ge{\lceil \ln\lambda \rceil}] \le {1-\left(1-\binom{a}{\lceil \ln\lambda \rceil}\left(\frac{1}{n}\right)^{\lceil \ln\lambda \rceil}\right)^{\lambda}}
\le {\lambda\binom{a}{\lceil \ln\lambda \rceil}\left(\frac{1}{n}\right)^{\lceil \ln\lambda \rceil}}\\
&\le \lambda\left(\frac{ea}{n\lceil \ln\lambda \rceil}\right)^{\lceil \ln\lambda \rceil} \le  \lambda\left(\frac{ea}{n\ln\lambda}\right)^{\ln\lambda}
=\left(\frac{e^2a}{n\ln\lambda}\right)^{\ln\lambda}
\end{align*}
where the last inequality uses $n\lceil \ln\lambda \rceil \ge en \ge ea$ from $\lambda \ge e^e$,
and
\begin{align}
\Pr[|\tilde{X}^g|& > |X^g|]\ge1-\left(1-a\frac 1n \left(1-\frac 1n\right)^{n-1}\right)^{\lambda}
\ge 1-\left(1-\frac{a}{ne}\right)^{\lambda}\\
&\ge1-\frac{1}{1+\frac{a\lambda}{ne}}=\frac{a\lambda}{ne+a\lambda}.
\label{eq:improve}
\end{align}
Then we have
\begin{align*}
\Pr[|\tilde{X}^g|& - |X^g|\ge {\lceil \ln\lambda \rceil}\mid |\tilde{X}^g| > |X^g|]=\frac{\Pr[|\tilde{X}^g| - |X^g|\ge{\ln\lambda}]}{\Pr[|\tilde{X}^g| > |X^g|]}\\
&\le\frac{\left(\frac{e^2a}{n\ln\lambda}\right)^{\ln\lambda}}{\frac{a\lambda}{ne+a\lambda}}
=\frac{\lambda(ne+a\lambda)a^{\ln\lambda-1}}{(n\ln\lambda)^{\ln\lambda}}.
\qedhere
\end{align*}
\end{proof}

There are four possible initial situations for the pair of $X^{0}_1$ and $X^1_1$, $(0,1), (0,0), (1,0),$ and $(1,1)$. Due to the random initialization, with probability at least $1-\exp(-(n-1)/8)$, we know that $\sum_{i=2}^n X^1_i < \frac 34 n$ happens. Hence, if $(X^0_1,X^1_1)=(0,1)$, then $(0(1*)), *\in \{0,1\}^{n-1}\setminus \{1^{n-1}\}$ local optimum is already reached. 

If $(X^0_1,X^1_1)=(0,0)$, we just consider the subprocess on the generations in which a strict increase of the number of $1$s happens, and we could show that before the number of $0$s decreases from $[n^c,n^c+\ln \lambda]$ for some constant $c<0.5$ to $[\ln \lambda, 2\ln \lambda]$, with probability $1-o(1)$, the first bit value will change from $0$ to $1$. See Lemma~\ref{lem:00}.
\begin{lemma}
Consider using \oplea to optimize the $n$-dimensional \omt problem. Given any $c <0.5$, let $\ln \lambda \le n^c$ and $\lambda \ge e^e$. If $(X^0_1,X^1_1)=(0,0)$, then with probability at least $1-\frac{2\ln \lambda}{n^c} - 3n\left(\frac{2e^2}{n^{1-c}\ln \lambda}\right)^{\ln \lambda}$, $(0(1*))$ local optima will be reached at one certain generation starting from such initialization.
\label{lem:00}
\end{lemma}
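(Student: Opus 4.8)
The plan is to follow the number of zeros in the current search point and to show that, with the stated probability, the first bit turns from $0$ to $1$ while many zeros are still present; by the discussion in Section~\ref{subsec:omt} this means an $(0(1*))$ configuration (with $*\neq 1^{n-1}$) has been reached.

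First I would record what happens as long as both the stored previous first bit and the current first bit equal $0$: then the fitness of the current point is $|X^g|$ and the fitness of each offspring $\tilde X^{(i)g}$ is $|\tilde X^{(i)g}|$, so the algorithm behaves exactly like the \oplea on \om over the $n$ bits, and a short induction shows that the stored previous first bit stays $0$ until the step at which the accepted offspring has first bit $1$ (if an offspring with first bit $1$ is accepted, the new stored previous is the current point, which still has first bit $0$; if nothing is accepted, the stored previous becomes the even earlier point, which had first bit $0$ by induction). At that step we are at an $(0(1*))$ configuration, and $*\neq 1^{n-1}$ unless that offspring equals $1^n$. Next I would delimit a \emph{phase}: the random initialization has more than $n^c+\ln\lambda$ zeros whp and, by Lemma~\ref{lem:ln}, every fitness-improving step removes fewer than $\lceil\ln\lambda\rceil$ zeros whp, so the number of zeros --- unless the first bit flipped even earlier, in which case we are already done --- must enter $[n^c,n^c+\ln\lambda]$ (it cannot jump over an interval of width $\ln\lambda$ with steps of size less than $\lceil\ln\lambda\rceil$); the phase then runs until the first step at which the first bit becomes $1$ or the number of zeros drops below $2\ln\lambda$. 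The phase is almost surely finite, since while we are in it the \om value strictly increases with positive probability in every generation.

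The core of the argument is to bound the probability that the first bit never flips during the phase. For a step $i$ of the phase write $a_i$ for the number of zeros at its start and $j_i$ for the number of zeros the accepted offspring turns into ones. Since \om, standard bit mutation, and ``keep a best offspring, breaking ties uniformly at random'' are all invariant under permutations of the zero positions, conditionally the set of flipped zeros is a uniformly random $j_i$-subset of the $a_i$ zeros, so the still-surviving first bit becomes a one with conditional probability $j_i/a_i$. Because $a_{i+1}=a_i-j_i+(\text{number of }1\to 0\text{ flips})\ge a_i-j_i$, this gives $1-j_i/a_i\le a_{i+1}/a_i$. I would then check that $Y_i:=\mathbbm{1}[\text{first bit not yet flipped before step }i]\,/\,a_i$ is a nonnegative supermartingale (it is determined by the history up to step $i-1$, and revealing the ``type'' of step $i$ --- hence $a_{i+1}$ and $j_i$ --- but not which zeros were hit gives conditional expectation $\tfrac{a_i-j_i}{a_i}\cdot\tfrac{1}{a_{i+1}}\le\tfrac{1}{a_i}$). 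Optional stopping at the end $N{+}1$ of the phase then yields $\tfrac{1}{2\ln\lambda}\Pr[\text{first bit survives the phase}]\le\mathbb{E}[Y_{N+1}]\le\mathbb{E}[Y_1]\le n^{-c}$, i.e.\ this probability is at most $2\ln\lambda/n^c$. (Equivalently, conditioned on the step types the survival probability is the telescoping product $\prod_i(1-j_i/a_i)\le\prod_i a_{i+1}/a_i=a_{N+1}/a_1\le 2\ln\lambda/n^c$.) This is the step I expect to be the main obstacle: identifying the right symmetry, choosing the correct potential $\mathbbm{1}[\cdot]/a_i$, and verifying the optional-stopping hypotheses so that the bound comes out as the sharp $2\ln\lambda/n^c$ rather than merely a constant.

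Finally I would assemble the error terms. The event that $(0(1*))$ is not reached from this initialization is contained in the union of: the first bit surviving the whole phase (probability $\le 2\ln\lambda/n^c$ by the step above); and the ``large jump'' events excluded along the way --- some improving step removing at least $\lceil\ln\lambda\rceil$ zeros, which in particular covers jumping straight to $1^n$ while at least $\ln\lambda$ zeros remain, and covers skipping over the interval $[n^c,n^c+\ln\lambda]$. A union bound over the $O(n)$ improving steps of the run, applying Lemma~\ref{lem:ln} with $a\le 2n^c$, bounds the latter by $3n\big(\tfrac{2e^2}{n^{1-c}\ln\lambda}\big)^{\ln\lambda}$. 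Adding the two contributions gives the claimed probability.
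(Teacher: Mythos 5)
Your core argument coincides with the paper's: while the stored and the current first bit are both $0$ the process is the \oplea on \om; conditioned on the accepted offspring flipping $j_i$ of the $a_i$ zeros, exchangeability of the zero positions gives flip probability $j_i/a_i\ge (a_i-a_{i+1})/a_i$; and your optional-stopping argument for the indicator divided by $a_i$ is exactly the paper's telescoping product $\prod_i\bigl(1-\tfrac{a_i-a_{i+1}}{a_i}\bigr)=a_k/a_1\le 2\ln\lambda/n^c$ over the phase running from $a_1\in[n^c,n^c+\ln\lambda]$ down to $[\ln\lambda,2\ln\lambda]$, with Lemma~\ref{lem:ln} playing the same role of excluding large jumps. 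So the heart of your proof is fine and is the paper's proof in different clothing.

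The genuine problem is the last paragraph, where you assemble the error terms. The event you union-bound, namely that \emph{some} improving step of the whole run gains at least $\lceil\ln\lambda\rceil$ ones, is not bounded by $3n\bigl(\tfrac{2e^2}{n^{1-c}\ln\lambda}\bigr)^{\ln\lambda}$, and ``applying Lemma~\ref{lem:ln} with $a\le 2n^c$'' is not legitimate for the early steps: right after initialization the number of zeros is $\Theta(n)$, for which the bound of Lemma~\ref{lem:ln} is vacuous, and indeed for $\lambda\ge e^e$ (so possibly $\lceil\ln\lambda\rceil=3$) an improving step gains at least $3$ ones with probability $\Omega(1)$ when half the bits are zeros --- so your union event actually occurs with probability $1-o(1)$. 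Moreover, even with a valid per-step bound, a union over $\Theta(n)$ steps instead of the paper's $2n^c$ steps overshoots the claimed error term by a factor of order $n^{1-c}$. The repair is what the paper does: impose the small-step requirement only on the at most $2n^c$ improving steps of the phase itself, where $a_i\le n^c+\ln\lambda\le 2n^c$; this is all the telescoping needs, and it also excludes jumping to $1^n$ while at least $\ln\lambda$ zeros remain. If you additionally want to justify that the interval $[n^c,n^c+\ln\lambda]$ is entered at all (a point the paper passes over silently), treat states with $a>2n^c$ separately: skipping the window from such a state needs a one-step gain of more than $a-n^c\ge\max\{a/2,n^c\}$, which a single offspring achieves with probability at most $\binom{a}{\lceil a/2\rceil}n^{-\lceil a/2\rceil}\le(2e/n)^{n^c}$, so a union over $\lambda$ offspring and polynomially many generations is superpolynomially small and does not disturb the stated bound.
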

\begin{proof}
Starting from $(X^0_1,X^1_1)=(0,0)$, we could assume that $X_1^2,\dots,X_1^{g_0}$ all take the value of $0$ when the number of $0$s of the $\{2,\dots,n\}$ bit positions of $X^{g_0}$, denoted as $a$, is in $[n^c,n^c+\ln\lambda]$ for any given constant $c < 0.5$. Otherwise, $(0(1*))$ local optima is already reached at some generation $g'<g_0$. We start from such situation that $a\in [n^c,n^c+\ln\lambda]$.

Under the condition that the number of 1s in the individual increases by at least $1$ in one generation, we suppose that the number of bits changing from 0 to 1 in this generation is $m$, and the number of $1$s increases by $m'$. Then $1\le m' \le m \le a$ and the probability that the first bit contributes one 0 is  
\begin{align*}
\frac{\binom{a-1}{m-1}}{\binom{a}{m}}=\frac{m}{a}\geq\frac{m'}{a}.
\end{align*} 
For the subprocess that the number of 1s in the individual increases by at least $1$ in one generation, we consider the number of 0s decreases from $[n^c,n^c+\ln\lambda]$ to $[\ln\lambda,2\ln\lambda]$. Let $a_1,\dots,a_k$ for some $k\in \N$ be such sequence of the number of $0$s. 
With Lemma~\ref{lem:ln}, we know the probability of the first bit value changing from $0$ to $1$ before the rest bit values all become $1$ is at least  
\begin{align*}
\bigg(\prod_{i=1}^{k-1}&\left(1-\frac{\lambda(ne+a_i\lambda)a_i^{\ln\lambda-1}}{(n\ln\lambda)^{\ln\lambda}}\right)\bigg)\left(1-\prod_{i=1}^{k-1}\left(1-\frac{a_i-a_{i+1}}{a_i}\right)\right)\\
=&\bigg(\prod_{i=1}^{k-1}\left(1-\frac{\lambda(ne+a_i\lambda)a_i^{\ln\lambda-1}}{(n\ln\lambda)^{\ln\lambda}}\right)\bigg)\left(1-\frac{a_k}{a_1}\right)\\
\ge & \left(1-\frac{\lambda(ne+a_1\lambda)a_1^{\ln\lambda-1}}{(n\ln\lambda)^{\ln\lambda}}\right)^{k-1}\left(1-\frac{2\ln \lambda}{n^c}\right)\\
\ge & \left(1-\frac{\lambda(ne+2n^c\lambda)(2n^c)^{\ln\lambda-1}}{(n\ln\lambda)^{\ln\lambda}}\right)^{2n^c}\left(1-\frac{2\ln \lambda}{n^c}\right)\\
\ge & 1-\frac{2\ln \lambda}{n^c} - \frac{2n^c\lambda(ne+2n^c\lambda)(2n^c)^{\ln\lambda-1}}{(n\ln\lambda)^{\ln\lambda}}\\
\ge & 1-\frac{2\ln \lambda}{n^c} - 3n\left(\frac{2e^2}{n^{1-c}\ln \lambda}\right)^{\ln \lambda},
\end{align*}
where the first inequality uses $\ln \lambda \le n^c$ and the last inequality uses $ne+2n^c\lambda \le 3n\lambda$ from $\lambda \ge e^e$.
\end{proof}

If $(X^0_1,X^1_1)=(1,0)$, then any possible generated $\tilde{X}^1$ can enter in the next generation, that is, $X^2=\tilde{X}^1$. Since with probability of $1-o(1)$, the number of $0$s will be greater than $n^c+\ln \lambda$ and thus $X^2\ne 1^n$, the process turns to the $(0,0)$ case discussed in Lemma~\ref{lem:00}. Hence, we have the following lemma.
\begin{lemma}
Given any $c <0.5$, let $n\in\N$ with $n^{1-c} \ge 16$, $\ln \lambda \le n^c$ and $\lambda \ge e^e$. Consider using \oplea to optimize the $n$-dimensional \omt problem.  If $(X^0_1,X^1_1)=(1,0)$ and $\sum_{i=2}^n X^1_i < \frac 34 n$, then with probability at least $1-2n^c\lambda \left(\frac{2e}{n}\right)^{n/8}-\frac{2\ln \lambda}{n^c} - 3n\left(\frac{2e^2}{n^{1-c}\ln \lambda}\right)^{\ln \lambda}$, $(0(1*))$ local optima will be reached at one certain generation starting from such initialization.
\label{lem:10}
\end{lemma}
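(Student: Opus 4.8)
The plan is to push the configuration one generation forward and then reduce to Lemma~\ref{lem:00}. First I would observe that the acceptance test in Step~5 of Algorithm~\ref{alg:oplEA} is forced at generation~$1$: since $X^0_1=1$ the parent value is $f(X^0,X^1)=|X^1|-n$, and because $X^1_1=0$ this equals $\sum_{i=2}^n X^1_i-n<\tfrac34 n-n<0$, whereas every mutant $\tilde X^{(i)1}$ of $X^1$ satisfies $f(X^1,\tilde X^{(i)1})=|\tilde X^{(i)1}|\ge 0$ (again because $X^1_1=0$). Hence, whichever offspring $\tilde X^1$ is selected in Step~4, $f(X^1,\tilde X^1)\ge 0>f(X^0,X^1)$, so $X^2=\tilde X^1$; in other words $X^2$ is one of the mutants with the largest number of ones among $\lambda$ independent standard bit mutations of $X^1$, and the stored previous first bit remains $X^1_1=0$.

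Next I would control the number of zeros of $X^2$. Writing $a$ for the number of zeros of $X^1$, the hypothesis $\sum_{i=2}^n X^1_i<\tfrac34 n$ together with $X^1_1=0$ gives $a\ge\lceil n/4\rceil$. For $X^2$ to contain at most $n^c+\ln\lambda\le 2n^c$ zeros, some offspring must flip at least $a-(n^c+\ln\lambda)\ge n/4-2n^c$ of these $a$ zeros to one, and the assumption $n^{1-c}\ge 16$ is exactly what makes this quantity at least $n/8$. A union bound over the $\lambda$ offspring, together with the elementary upper-tail estimate $\binom{a}{k}(1/n)^{k}\le(ea/(kn))^{k}$, then bounds the probability that $X^2$ has at most $n^c+\ln\lambda$ zeros by $2n^c\lambda(2e/n)^{n/8}$. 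From here on I condition on the complementary event, so that $X^2$ has strictly more than $n^c+\ln\lambda$ zeros; in particular $X^2\ne 1^n$.

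The remainder is a case distinction on $X^2_1$. If $X^2_1=1$, then all of the $>n^c+\ln\lambda\ge 1$ zeros of $X^2$ lie in positions $2,\dots,n$, so $X^2_{[2..n]}\ne 1^{n-1}$; combined with $(X^1_1,X^2_1)=(0,1)$ this is exactly \emph{Event \textrm{I}} at generation $g_0=2$, i.e.\ the $(0(1*))$ local optimum has already been reached. If instead $X^2_1=0$, then $(X^1_1,X^2_1)=(0,0)$ and $X^2$ has more than $n^c+\ln\lambda$ zeros, which (after shifting the generation index by one) is precisely the situation analysed in Lemma~\ref{lem:00}; applying that lemma, the $(0(1*))$ local optimum is reached with probability at least $1-\tfrac{2\ln\lambda}{n^c}-3n\big(\tfrac{2e^2}{n^{1-c}\ln\lambda}\big)^{\ln\lambda}$. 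A final union bound over the failure event of the previous paragraph and the failure probability imported from Lemma~\ref{lem:00} yields the claimed bound.

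The main obstacle I anticipate is the single-step estimate of the second paragraph: one must be confident that the number of zeros cannot collapse below $n^c+\ln\lambda$ in the one mutation step $X^1\to X^2$ except with (super-)exponentially small probability, that the selection over $\lambda$ offspring does not spoil this (it does not, since the forced-acceptance argument holds for \emph{every} offspring, so it suffices to union-bound over offspring before worrying about which one is chosen), and that the hypothesis $n^{1-c}\ge 16$ is consumed in exactly the right place. A secondary care point is to check that Lemma~\ref{lem:00} may legitimately be invoked on the conditioned, non-random configuration at generation~$2$: this is fine because its proof only uses that the current search point carries more than $n^c+\ln\lambda$ zeros, together with $\ln\lambda\le n^c$ and $\lambda\ge e^e$, all of which we have already ensured.
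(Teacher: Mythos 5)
Your proposal is correct and follows essentially the same route as the paper's proof: forced acceptance of $\tilde X^1$ as $X^2$ because $(X^0_1,X^1_1)=(1,0)$, a tail/union bound over the $\lambda$ offspring (using $\binom{a}{k}(1/n)^k\le(ea/(kn))^k$ and $n^{1-c}\ge 16$) showing the zero count of $X^2$ stays above $n^c+\ln\lambda$ except with probability $2n^c\lambda(2e/n)^{n/8}$, then the case distinction on $X^2_1$ (Event I immediately, or reduction to Lemma~\ref{lem:00}) and a final union bound. Your explicit justification that Lemma~\ref{lem:00} applies to the conditioned configuration at generation $2$ is a point the paper leaves implicit, and your event ``more than $n^c+\ln\lambda$ zeros'' is in fact cleaner than the paper's wording, but the argument is the same.
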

\begin{proof}
Since $(X^0_1,X^1_1)=(1,0)$, we know that any generated $\tilde{X^1}$ will have the fitness value $f(X^1,\tilde{X^1}) \ge 0 > f(X^0,{X^1})$, and thus surely becomes $X^{2}$. Since $\sum_{i=2}^n X^1_i < \frac 34 n$, denoting $a$ as the number of $0$s in $X^1_{[2..n]}$ we know that 
\begin{align*}
\Pr&\left[|X_{[2..n]}^2| < n-1-a-n^c-\ln n\right] \ge \Pr\left[|X_{[2..n]}^2| < n-1-a-2n^c\right] \\
\ge & \left(1-\sum_{k=a-2n^c+1}^{a} \binom{a}{k} \frac1{n^k}\right)^{\lambda} \ge \left(1-\sum_{k=a-2n^c+1}^{a} \left(\frac{ea}{nk}\right)^k\right)^{\lambda} \\
\ge & \left(1-2n^c \left(\frac{ea}{n(a-2n^c+1)}\right)^{a-2n^c+1}\right)^{\lambda}  \ge\left(1-2n^c \left(\frac{2e}{n}\right)^{\frac12 (a+1)}\right)^{\lambda}\\
\ge & 1-2n^c\lambda \left(\frac{2e}{n}\right)^{\frac12 (a+1)} \ge 1-2n^c\lambda \left(\frac{2e}{n}\right)^{n/8},
\end{align*} 
where the antepenultimate inequality uses $2n^c \le \frac12 \cdot\frac 14 n \le \frac12 (a+1)$ for $n^{1-c} \ge 16$, and the last inequality uses $a+1\ge \frac 14n$ from $\sum_{i=2}^n X^1_i < \frac 34 n$. 
Condition on that there are at least $n^c+\ln n$ zeros in $X^{2}_{[2..n]}$, if $X^{2}_1=1$, then $(0(1*))$ local optima is already reached. Otherwise the following process is identical to the one discussed in Lemma~\ref{lem:00}. Hence, the probability that $(0(1*))$ local optima will be reached at one certain generation starting from such initialization is at least
\begin{align*}
\bigg( 1-&2n^c\lambda \left(\frac{2e}{n}\right)^{n/8} \bigg)\\
&\cdot \left(\Pr[X^{2}_1=1]+(1-\Pr[X^{2}_1=1])\left(1-\frac{2\ln \lambda}{n^c} - 3n\left(\frac{2e^2}{n^{1-c}\ln \lambda}\right)^{\ln \lambda}\right)\right)\\
\ge&  1-2n^c\lambda \left(\frac{2e}{n}\right)^{n/8}-\frac{2\ln \lambda}{n^c} - 3n\left(\frac{2e^2}{n^{1-c}\ln \lambda}\right)^{\ln \lambda}.
\qedhere
\end{align*}
\end{proof}

If $(X^0_1,X^1_1)=(1,1)$, then if the first bit value changes from $1$ to $0$ in some generation before the number of $0$s decreases to $n^c$, the further process turns to the $(1,0)$ case and then $(0,0)$ case, and thus with probability of $1-o(1)$, $(0(1*))$ local optima will be reached in some future generation. Otherwise, if the first bit value stays at $1$ when the number of $0$s decreases to $n^c$, then with high probability, the first bit stays at $1$ when the rest $n-1$ bits all have the value of $1$. See Lemma~\ref{lem:11}.
\begin{lemma}
Given any $c <0.5$, let $n\in\N$ with $n^{1-c} \ge 16$, $\ln \lambda \le n^c$ and $\lambda \ge e^e$.  Consider using \oplea to optimize the $n$-dimensional \omt problem. If $(X^0_1,X^1_1)=(1,1)$, then with probability at least $1-2n^c\lambda \left(\frac{2e}{n}\right)^{n/8}-\frac{2\ln \lambda}{n^c} - 3n\left(\frac{2e^2}{n^{1-c}\ln \lambda}\right)^{\ln \lambda} -\frac{\lambda}{n^{1-2c}}-(n-1)e^{-\frac{n^c}{e}}$, $(0(1*))$ or $(1(1^n))$ local optima will be reached at one certain generation starting from such initialization.
\label{lem:11}
\end{lemma}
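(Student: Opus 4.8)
The plan is to split the run started from $(X^0_1,X^1_1)=(1,1)$ according to whether and — crucially — at which stage the first bit of the current individual first switches from $1$ to $0$. The key structural observation is that as long as both the stored first bit and the current first bit equal $1$ (which holds at generation~$1$), Steps~4--5 of Algorithm~\ref{alg:oplEA} just keep, among the $\lambda$ offspring, one with the most ones and accept it iff it has at least as many ones as the parent; hence the number of ones is non-decreasing and the number $a$ of zeros in positions $[2..n]$ is non-increasing along accepted generations. I would therefore pass to the subprocess of generations in which the number of ones strictly increases, and — by the same elementary counting used in the proofs of Lemma~\ref{lem:ln} and Lemma~\ref{lem:00} — bound the probability that in such a generation the selected offspring has a flipped first bit: this offspring must flip bit~$1$ and simultaneously net at least one extra one among the $a$ zeros of $[2..n]$, which happens with probability $O(\lambda a/n^2)$.

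Now the dichotomy. Case (i): the first bit switches to $0$ at some generation while $a\ge n^c$ still holds, i.e.\ before $a$ has dropped to $n^c$. Then the state is $(1,Y)$ with $Y_1=0$, and I would first argue that with overwhelming probability $Y$ still retains at least $n^c+\ln n$ zeros in $[2..n]$ (a single mutation step cannot destroy more, exactly as in the proof of Lemma~\ref{lem:10}); from here the continuation is precisely the $(1,0)$ situation, and after one forced acceptance the $(0,0)$ situation, so Lemmas~\ref{lem:10} and~\ref{lem:00} give that $(0(1*))$ is reached, contributing exactly the first three failure terms. Case (ii): the first bit stays at $1$ until $a$ has dropped to $n^c$; then, conditioning on a state $(1,X)$ with $X_1=1$ and $a\le n^c$ zeros in $[2..n]$, I would show that with high probability the first bit never flips afterwards and all $n-1$ trailing bits eventually become $1$, i.e.\ $(1(1^n))$ is reached. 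For this I add two contributions: summing the $O(\lambda a/n^2)$ per-generation flip probability over the $O(n^c)$ remaining improving generations yields a term of order $\lambda/n^{1-2c}$; and a union bound over the at most $n-1$ trailing positions for the event ``some zero-bit is never flipped within a polynomially long horizon'' yields the term $(n-1)e^{-n^c/e}$.

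Finally I would union-bound over the initialization event that $X^1$ has a roughly balanced number of zeros (which fails only with probability $e^{-\Omega(n)}$, dominated by the listed terms), the failure of Case (i), and the failure of Case (ii); on the complementary event exactly one of the two cases occurs and in either case one of the two local optima $(0(1*))$ or $(1(1^n))$ is reached at some generation, which gives the stated probability.

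The step I expect to be the main obstacle is the bookkeeping at the boundary of Case (i): at the generation where the first bit flips, the number of remaining zeros may already be much smaller than the $\Theta(n)$ that Lemma~\ref{lem:10} nominally assumes, so one must either show that the flip typically happens while zeros are still plentiful (so Lemma~\ref{lem:10} applies as stated) or re-run the $(0,0)$-type argument in the small-$a$ regime; and one must check the closely related contingency that, if the first bit flips when only a few zeros remain, the process could settle in the \emph{global} optimum rather than in a local optimum — this must be shown to have probability small enough to be absorbed into the $\lambda/n^{1-2c}$ term — all while keeping the accumulated probabilities consistent with the stated expression.
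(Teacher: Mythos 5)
Your proposal is correct and follows essentially the same route as the paper: reduce the ``first bit flips to $0$ while zeros are still above $n^c$'' case to Lemmas~\ref{lem:10} and~\ref{lem:00}, and otherwise show that the first bit survives (the $\lambda/n^{1-2c}$ term, which the paper obtains by bounding the per-generation flip probability $\lambda a/n^2\le\lambda/n^{2-c}$ over a fixed horizon of $n^{1+c}$ generations rather than over the improving generations) while the remaining $n-1$ bits all reach $1$ within that horizon (the $(n-1)e^{-n^c/e}$ union-bound term), so that $(1(1^n))$ is reached. The boundary bookkeeping you flag as the main obstacle is likewise passed over lightly in the paper's own proof, which simply ``turns to the case of Lemma~\ref{lem:10}'' at that point.
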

\begin{proof}
Starting from $(X^0_1,X^1_1)=(1,1)$, we could assume that $X_1^2,\dots,X_1^{g_0}$ all take the value of $1$ when the number of $0$s of the $\{2,\dots,n\}$ bit positions of $X^{g_0}$, denoted as $a$, is not greater than $n^c$ for any given constant $c < 0.5$. Otherwise, $(X_1^{g'-1},X_1^{g'})=(1,0)$ will happen at some generation $g'<g_0$, and we just turn to the case discussed in Lemma~\ref{lem:10}.

Condition on that the first bit value stays at $1$, let $\tilde{T}$ be the time that the rest $n-1$ bit values all become $1$. Suppose that the process is in the $g$-th generation. Let $Y$ be a random variable that is generated from $X^g$ via the standard bit-wise mutation, and any subset $S\in[1..n]$. It is not difficult to see that $|\tilde{X}_{S}|$ stochastically dominates\footnote{See more about stochastic dominance in~\cite{Doerr19tcs}.} $|Y_{S}|$. Then for a certain bit position $i_0\in[2..n]$, let $S=[2..n]\setminus\{i_0\}$, and we have 
\begin{align*}
\Pr[|\tilde{X}^g_{S}| \ge |X^g_{S}|] \ge \Pr[|Y_{S}| \ge |X^g_{S}|] \ge \left(1-\tfrac{1}{n}\right)^{n-2}.
\end{align*}
In this case, the bit position $i_0$ is neutral and the probability of being $1$ condition on $|\tilde{X}^g_{S}| \ge |X^g_{S}|$ is $\tfrac 1n$. Hence, the probability that the $i_0$-th bit value becomes $1$ is at least $\tfrac1n  \left(1-\tfrac{1}{n}\right)^{n-2}$. Then the event that the $i_0$-th bit value stays at $0$ in $t$ generations is at most
\begin{align*}
\left(1-\frac 1n\left(1-\frac1n\right)^{n-2}\right)^{t} \le \left(1-\frac{1}{en}\right)^t,
\end{align*}
and a union bound gives
\begin{align*}
\Pr[\tilde{T} > n^{1+c} \mid \text{the first bit value stays at } 1] \le (n-1)\left(1-\frac{1}{en}\right)^{n^{1+c}} \le (n-1)e^{-\frac{n^c}{e}}.
\end{align*}

Since the probability for $X^{g+1}=\tilde{X^g}$ with $\tilde{X}_1^g=0$ for any $g\ge g_0$ is at most $\lambda \tfrac 1n \tfrac an \le \frac{\lambda}{n^{2-c}}$ where $a$ is the number of $0$s in $X^g$, we know that the probability that $(1(1^n))$ local optimum is reached within $n^{1+c}$ generations is at least
\begin{align*}
\bigg(1-&\frac{\lambda}{n^{2-c}}\bigg)^{n^{1+c}}\left(1-(n-1)e^{-\frac{n^c}{e}}\right)
\ge 1-\frac{\lambda}{n^{1-2c}}-(n-1)e^{-\frac{n^c}{e}}.
\end{align*}

Hence, also considering the probability that $(0(1*))$ local optima is reached after $g<g_0$ with $(X_1^{g'-1},X_1^{g'})=(1,0)$, we have the probability to get stuck is at least $1-\frac{\lambda}{n^{n^{n/4}}}-\frac{2\ln \lambda}{n^c} - 3n\left(\frac{2e^2}{n^{1-c}\ln \lambda}\right)^{\ln \lambda} -\frac{\lambda}{n^{1-2c}}-(n-1)e^{-\frac{n^c}{e}}$.
\end{proof}

Now we establish the main result for the non-convergence of the \oplea on \omt with high probability.
\begin{theorem}\label{thm:oplea}
Let $n\ge 64, \lambda \ge e^e$ and $\ln \lambda \le n^{1/3}$. Then with probability at least $1-3n\left(\frac{2e^2}{n^{2/3}\ln \lambda}\right)^{\ln \lambda} -\frac{3\lambda}{n^{1/3}}-ne^{-\frac{n^{1/3}}{e}}$, the \oplea cannot find the optimum of the \omt problem.
\end{theorem}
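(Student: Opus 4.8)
The plan is to condition on the pair $(X^0_1,X^1_1)$ of first-bit values of the two random initial search points, dispatch each of the four resulting cases to the lemma that already handles it with the choice $c=1/3$, and argue that all four conditional ``gets stuck'' probabilities are at least the (smallest) bound coming from Lemma~\ref{lem:11}, which then simplifies to the claimed expression for $n\ge 64$. First I would fix $c=1/3$: the hypotheses $n^{1-c}\ge 16$, $\ln\lambda\le n^{c}$ and $\lambda\ge e^{e}$ of Lemmas~\ref{lem:00},~\ref{lem:10} and~\ref{lem:11} then read exactly $n\ge 64$, $\ln\lambda\le n^{1/3}$ and $\lambda\ge e^{e}$, all granted by hypothesis. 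I would also record, via a Chernoff bound, that the event $\mathcal{E}:=\{\sum_{i=2}^{n}X^1_i<\frac34 n\}$ fails with probability at most $\exp(-(n-1)/8)$, and that $\mathcal{E}$ depends only on $X^1_{[2..n]}$, hence is independent of $(X^0_1,X^1_1)$.

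Next I would split into the four equiprobable cases. In case $(0,1)$, on $\mathcal{E}$ one has $X^1_{[2..n]}\neq 1^{n-1}$, so the configuration is already the $(0(1*))$ local optimum and the conditional ``gets stuck'' probability is at least $\Pr[\mathcal{E}]\ge 1-\exp(-(n-1)/8)$. In case $(0,0)$, Lemma~\ref{lem:00} gives at least $1-\frac{2\ln\lambda}{n^{1/3}}-3n\big(\frac{2e^{2}}{n^{2/3}\ln\lambda}\big)^{\ln\lambda}$. In case $(1,0)$, Lemma~\ref{lem:10} together with $\Pr[\mathcal{E}]\ge 1-\exp(-(n-1)/8)$ gives at least $1-\exp(-(n-1)/8)-2n^{1/3}\lambda(2e/n)^{n/8}-\frac{2\ln\lambda}{n^{1/3}}-3n\big(\frac{2e^{2}}{n^{2/3}\ln\lambda}\big)^{\ln\lambda}$. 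In case $(1,1)$, Lemma~\ref{lem:11} gives at least
\[
B:=1-2n^{1/3}\lambda\Big(\frac{2e}{n}\Big)^{n/8}-\frac{2\ln\lambda}{n^{1/3}}-3n\Big(\frac{2e^{2}}{n^{2/3}\ln\lambda}\Big)^{\ln\lambda}-\frac{\lambda}{n^{1/3}}-(n-1)e^{-n^{1/3}/e}.
\]
A short comparison (using $(n-1)e^{-n^{1/3}/e}\ge\exp(-(n-1)/8)$, immediate for $n\ge 64$) shows each of the other three conditional bounds is at least $B$, so the law of total probability over $(X^0_1,X^1_1)$ yields that \oplea does not find the optimum with probability at least $B$.

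It then remains to simplify $B$. Using $\ln\lambda\le\lambda$ gives $\frac{2\ln\lambda}{n^{1/3}}+\frac{\lambda}{n^{1/3}}\le\frac{3\lambda}{n^{1/3}}$. For $n\ge 64$ we have $2e/n<1/8$, so $(2e/n)^{n/8}$ decays like $e^{-\Theta(n\ln n)}$ and swamps the factor $n^{1/3}\lambda\le n^{1/3}e^{n^{1/3}}$ (recall $\ln\lambda\le n^{1/3}$); a direct estimate then yields $2n^{1/3}\lambda(2e/n)^{n/8}\le\frac12 e^{-n^{1/3}/e}$, and likewise $\exp(-(n-1)/8)\le\frac12 e^{-n^{1/3}/e}$. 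Hence $2n^{1/3}\lambda(2e/n)^{n/8}$ and $\exp(-(n-1)/8)$ together can be absorbed by bumping $(n-1)e^{-n^{1/3}/e}$ up to $n e^{-n^{1/3}/e}$, so $B\ge 1-3n\big(\frac{2e^{2}}{n^{2/3}\ln\lambda}\big)^{\ln\lambda}-\frac{3\lambda}{n^{1/3}}-n e^{-n^{1/3}/e}$, which is the assertion. I expect the only genuine work to be this last bookkeeping step — certifying for \emph{every} $n\ge 64$, not merely asymptotically, that the super-exponentially small terms are dominated by $e^{-n^{1/3}/e}$ — together with the routine check that Lemma~\ref{lem:11}'s bound is indeed the minimum over the four initializations; the rest is a direct case split feeding into lemmas already proved.
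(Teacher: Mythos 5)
Your proposal is correct and follows essentially the same route as the paper: a Chernoff bound on the initialization, a case split over the four values of $(X^0_1,X^1_1)$ dispatched to Lemmas~\ref{lem:00}--\ref{lem:11} with $c=1/3$, taking the weakest (Lemma~\ref{lem:11}) bound, and then numerically absorbing the small terms for $n\ge 64$. The only difference is trivial bookkeeping (the paper folds the $2n^{1/3}\lambda(2e/n)^{n/8}$ term into $\tfrac{3\lambda}{n^{1/3}}$ and only the Chernoff failure into the bump from $(n-1)$ to $n$ in $ne^{-n^{1/3}/e}$, while you absorb both exponentially small terms there), which leads to the same final bound.
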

\begin{proof}
It is not difficult to see that from the Chernoff inequality, we have $\Pr[\sum_{i=2}^n X^1_i < \frac 34 n] \ge 1-\exp(-(n-1)/8)$. We consider the process with this initialization.

If $(X^0_1,X^1_1)=(0,1)$, then the $(0(1*))$ local optima is reached already. Together with the initial cases $(X^0_1,X^1_1)=(0,0),(1,0)$, and $(1,1)$ from Lemmas~\ref{lem:00} to~\ref{lem:11} taking $c=1/3$, we have the probability that the $(0(1*))$ or $(1(1^n))$ local optima is reached during the optimization process is at least
\begin{align*}
\bigg(1-&e^{-{(n-1)}/{8}}\bigg)
 \left(1-2n^{1/3}\lambda \left(\frac{2e}{n}\right)^{n/8}-\frac{2\ln \lambda}{n^{1/3}} \right.\\
 & \left. - 3n\left(\frac{2e^2}{n^{2/3}\ln \lambda}\right)^{\ln \lambda} -\frac{\lambda}{n^{1/3}}-(n-1)e^{-{n^{1/3}}/{e}}\right)\\
\ge & 1-e^{-{(n-1)}/{8}}- 3n\left(\frac{2e^2}{n^{2/3}\ln \lambda}\right)^{\ln \lambda} -\frac{3\lambda}{n^{1/3}}-(n-1)e^{-n^{1/3}/{e}}\\
\ge & 1-3n\left(\frac{2e^2}{n^{2/3}\ln \lambda}\right)^{\ln \lambda} -\frac{3\lambda}{n^{1/3}}-ne^{-\frac{n^{1/3}}{e}},
\end{align*}
where the first inequality uses that $2n^{1/3} \left(\frac{2e}{n}\right)^{n/8} \le n^{-1/3}$ when $n\ge 64$ and $2\ln \lambda \le \lambda$ for $\lambda \ge e^e$, and the last inequality uses that ${(n-1)}/{8} \ge -n^{1/3}/{e}$ for $n\ge 64$.
\end{proof}
Note that in Theorem~\ref{thm:oplea}, we require that $\lambda$ should not be too large, like $\ln \lambda \le n^{1/3}$, or more strictly, we need $3\lambda \le n^{1/3}$ to make the probability greater than $0$. One reason for the restrictions on $\lambda$ stems from that we consider the probability of the first bit value changing from $1$ to $0$, and use the uniform upper bound $\lambda\frac1n\frac{a}{n} \le \frac{\lambda}{n^{2-c}}$ for all the number of $0$s $a\le n^c, c<0.5$, which is too loose for a large offspring size. When the offspring size $\lambda$ becomes large, the high selection pressure among the offspring will be more likely to reject the one with the first bit value of $0$ which contributes a $-1$ to the fitness. Hence, we conjecture that for the large $\lambda$, the \oplea cannot find the optimum of the \omt problem also with $1-o(1)$ probability. However, if the offspring population size becomes extremely large, then starting from the initial $X^1_1=0$, with a good chance we could generate $\tilde{X}^1=1^n$ and the optimum is reached. It might need more detailed discussion in the future. Such analyses might be connected to the work of the impact of population size, such as~\cite{ChenTCY12,Sudholt20}.

\section{Comma Selection Can Help}
\label{sec:oclea}
\subsection{\oclea}
As mentioned in Section~\ref{subsec:omt}, this paper only discusses solving the \omt problem in the offline mode with $n$-bit string encoding for the current search point and storing the previous solution for the fitness evaluation. The \oclea algorithm for the problem requiring two consecutive time steps is similar to the traditional \oclea algorithm, and the only difference is that the best solution of the last step is stored for the fitness evaluation. Without confusion, we still call the time-linkage version the \oclea algorithm. See Algorithm~\ref{alg:oclea} for details. The global optimum in this case is that $\tilde{X}^g=1^n$ condition on $X_1^g=0$. 

\begin{algorithm}[!ht]
    \caption{\oclea to maximize fitness function $f$ requiring two consecutive time steps}
    {\small
    \begin{algorithmic}[1]
    \STATE {Generate the random initial two generations $X^0=(X_{1}^0,\dots,X_{n}^0)$ and $X^1=(X_{1}^1,\dots,X_{n}^1)$}
    \FOR {$g=1,2,\dots$}
    \STATEx {\quad$\%\%$ \textit{Mutation}}
    \STATE {Independently generate $\tilde{X}^{(1)g}, \dots, \tilde{X}^{(\lambda)g}$, each via independently flipping each bit value of $X^g$ with probability $1/n$}
    \STATEx {\quad$\%\%$ \textit{Selection}}
    \STATE Let $S=\{\tilde{X}^{(i)g} \mid i\in[1..\lambda], \forall j\in[1..\lambda], f(X^{g},\tilde{X}^{(i)g}) \ge f(X^{g},\tilde{X}^{(j)g})\}$, and from $S$ uniformly at random select one element, denoted as $\tilde{X}^g$
    \STATE $X^{g+1}=\tilde{X}^g$
    \ENDFOR
    \end{algorithmic}
    \label{alg:oclea}
    }
\end{algorithm}

\subsection{Convergence}
Corresponding to the two local optima existed in the \oea, since the fitness value of $X^g$ together with its ancestor $X^{g-1}$ does not join in the selection, we know that 
\begin{itemize}
\item for $(0(1*))$ local optima, that is for a certain generation $g>0$, $(X^{g-1}_1,X^g_1)=(0,1)$ but $X^g\neq 1^n$, we know that $\tilde{X}^{g}$ will have a fitness $f(X^g_1, \tilde{X}^{g}) \le 0 < f(X^{g-1}_1,X^g_1)$. However, since $(X^{g-1}_1,X^g_1)$ does not join in the selection, see Step 5, $\tilde{X}^g$ can enter into the next generation, and no stagnation happens.
\item for $(1(1^n))$ local optimum, that is for a certain generation $g>0$, $(X^{g-1}_1,X^g)=(1,1^n)$, we know that $\tilde{X}^{g} \neq 1^n$ will have a fitness $f(X^g_1, \tilde{X}^{g}) < 0 = f(X^{g-1}_1,X^g_1)$. However, since $(X^{g-1}_1,X^g_1)$ does not join in the selection, see Step 5, $\tilde{X}^g$ can still enter into the next generation as in Step 5, and no stagnation happens.
\end{itemize}
Hence, we could know that with probability $1$, the optimum of the \omt can be reached. As a comparison with the probability of the \oplea stagnation in Theorem~\ref{thm:oplea}, we put this into the following theorem although it is trivial.
\begin{theorem}
Let $n\in \N$ and $X^{0},X^{1},\dots$ be the sequence of the solutions for the \oclea on the $n$-dimensional \omt problem. Then with probability $1$, there exists $g_0 \in \N$ such that $f(X^{g_0-1},X^{g_0})=n$.
\label{thm:prob1}
\end{theorem}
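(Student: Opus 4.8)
The plan is to exploit the fact that \oclea uses comma selection: in Algorithm~\ref{alg:oclea} the new parent $X^{g+1}$ is always the selected offspring (Step~5), and the stored previous individual $X^{g-1}$ enters only through the fitness value $f(X^{g-1},X^g)$, which we use solely to decide whether the optimum has been hit, never in the transition from $X^g$ to $X^{g+1}$. Hence $X^1,X^2,\dots$ is a time-homogeneous Markov chain on the finite set $\{0,1\}^n$, and it suffices to show that from every state the ``optimum found'' condition $f(X^{g_0-1},X^{g_0})=n$ (i.e.\ $X^{g_0}=1^n$ and $X^{g_0-1}_1=0$) is attained within a bounded number of further generations with probability at least a positive constant independent of the current state. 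A routine repeated-trials argument then upgrades this to almost-sure reachability.

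For the bounded-probability step I would exhibit an explicit two-generation move available from an arbitrary state. Suppose generation $g$ starts with $X^g$ arbitrary. With probability at least $n^{-n\lambda}$ all $\lambda$ offspring produced in Step~3 equal $0^n$: for one offspring, flipping every $1$-bit of $X^g$ and keeping every $0$-bit has probability at least $(1/n)^n$, and the $\lambda$ offspring are independent; on this event the selected offspring (one of maximal number of ones) is $0^n$, so $X^{g+1}=0^n$. Conditioning on that, with probability at least $n^{-n\lambda}$ all $\lambda$ offspring of generation $g+1$ equal $1^n$, so $X^{g+2}=1^n$ while $X^{g+1}_1=0$, whence $f(X^{g+1},X^{g+2})=n$. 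Thus from any history in which the optimum has not yet been found, it is found within the next two generations with probability at least $p:=n^{-2n\lambda}>0$, uniformly over the state.

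It remains to iterate these trials over disjoint windows. For $k\ge 1$ let $E_k$ be the event that $X^{2k}=0^n$ and $X^{2k+1}=1^n$; on $E_k$ one has $f(X^{2k},X^{2k+1})=n$, so the optimum has been reached by generation $2k+1$. Writing $\mathcal{F}_g$ for the history through generation $g$, the two-generation move gives $\Pr[E_k\mid\mathcal{F}_{2k-1}]\ge p$, and since $E_1,\dots,E_{k-1}$ are $\mathcal{F}_{2k-1}$-measurable the tower rule yields $\Pr[E_1^c\cap\dots\cap E_k^c]\le(1-p)^k\to 0$. Hence with probability $1$ some $E_k$ occurs, so there is a finite $g_0$ with $f(X^{g_0-1},X^{g_0})=n$, as claimed. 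There is essentially no real obstacle here (the authors themselves call the statement trivial); the only points that need a moment's care are observing that the stored previous individual does not affect the transition kernel, and choosing the conditioning on a suitable filtration so that the state-independent lower bound $p$ can be iterated cleanly.
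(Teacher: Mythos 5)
Your proposal is correct, and it is in fact more formal than what the paper does: the paper offers no explicit construction at all, merely observing that in Step~5 of Algorithm~\ref{alg:oclea} the pair $(X^{g-1},X^g)$ never competes in selection, so neither the $(0(1*))$ configurations nor $(1(1^n))$ is absorbing, and then invoking the general fact that a non-elitist algorithm has positive probability of moving between the subspaces $\{0\}\times\{0,1\}^n$ and $\{1\}\times\{0,1\}^n$; it explicitly calls the theorem trivial. You instead exhibit a concrete two-generation witness (all $\lambda$ offspring equal $0^n$, then all equal $1^n$), derive the uniform lower bound $p=n^{-2n\lambda}$, and run a clean filtration/repeated-trials argument over disjoint windows; this buys full rigor (and even a crude finite bound on the expected hitting time of the optimum, far weaker than Theorem~\ref{thm:oclea} but valid for every $\lambda$), at the cost of a longer argument than the paper's one-line observation. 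One small caveat: your per-offspring bound $(1/n)^{k}(1-1/n)^{n-k}\ge (1/n)^n$ uses $1-1/n\ge 1/n$, i.e.\ $n\ge 2$; for $n=1$ the factor $1-1/n$ vanishes and your specific move to $0^n$ may have probability $0$, though the theorem is immediate there since the single bit is flipped deterministically each generation, so the optimal pair $(0,1)$ occurs within the first three generations. Apart from this trivial edge case, your argument is sound and matches the paper's conclusion.
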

Note that the probability of $1$ for reaching the global optimum is not the unique property for the \oclea, but also a general character for all non-elitist algorithms in which the inferior solution to the best-so-far one can still have a positive probability to be accepted and enter into the next generation. Using the language we discussed in Section~\ref{subsec:omt}, it can be intuitively explained that since the non-elitist algorithms can accept the inferior solutions, they could have a positive probability of moving from the current $\{a\}\times\{0,1\}^n, a \in \{0,1\}$ to any subspace $\{b\}\times\{0,1\}^n, b\in\{0,1\}$, and thus could reach the optimum of the space $\{0,1\}\times\{0,1\}^n$ with probability of $1$. 

\subsection{Runtime Analysis}
Although the \oclea does not stagnate in the two local optima, it needs different iterations in expectation to leave them. The first case that $(X^{g-1}_1,X^g_1)=(0,1)$ but $X^g\neq 1^n$ will not be satisfied as the $(X^g_1,X^{g+1}_1)$ will be $(1,*)$ where $*$ could be $0$ or $1$ depending on the $\tilde{X}^g$. But for the second case, that is, the $(1(1^n))$ local optimum, $\tilde{X}^g_1=0$ is the only case to leave the stagnation. Noting that the probability of reaching $\tilde{X}_1^{g}=0$ is less than the probability of $X^{g+1}\ne X^g$, that is, all generated $\lambda$ offspring needs to change at least one bit value from $1$ to $0$, which happens with probability
\begin{equation}
\left(1-\left(1-\frac 1n\right)^n\right)^{\lambda} \le \left(1-\left(1-\frac 1n\right)\frac 1e\right)^{\lambda} \le \left(\frac{2e-1}{2e}\right)^{\lambda}
\label{eq:probupper}
\end{equation}
where we use $n\ge 2$ for the last inequality. Hence, once the second stagnation case happens, we need at least $\left(\frac{2e}{2e-1}\right)^{\lambda}$ expected iterations to leave the stagnation. It is not difficult to see that from the random initialization, for any $g>0$, $X_1^g$ stochastically dominates the random variable $Y$ that obeys the Bernoulli distribution with success probability of $\tfrac 12$. Hence, the $(1(1^n))$ case happens for the first time the current solution becomes $1^n$ with probability at least $\tfrac 12$, and then we could easily obtain the following lower bound.
\begin{theorem}
The expected number of fitness evaluations for the \oclea reaching the optimum of the \omt problem is $\Omega\left(\lambda\left(\frac{2e}{2e-1}\right)^{\lambda}\right)$.
\label{thm:lower}
\end{theorem}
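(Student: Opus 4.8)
The plan is to isolate the $(1(1^n))$ stagnation state, show that the process enters it with constant probability, and then lower-bound how long it stays there. Let $g^*$ be the first generation with $f(X^{g^*-1},X^{g^*})=n$; by Theorem~\ref{thm:prob1} it is almost surely finite. Since optimality forces $X^{g^*}=1^n$, the first generation $g_1$ with $X^{g_1}=1^n$ satisfies $g_1\le g^*<\infty$ almost surely. Using the stochastic domination of each $X^{g}_1$ over a fair coin recorded just before the theorem, I would first argue that the stored previous first bit at generation $g_1$ equals $1$ with probability at least $\tfrac12$; on this event, generation $g_1$ sits in the configuration $(X^{g_1-1}_1,X^{g_1})=(1,1^n)$.

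Conditioning on that event, the next step is to show that the optimum cannot be reached before the current solution leaves $1^n$. Indeed, for every run of consecutive generations starting at $g_1$ in which $X^g=1^n$, the stored previous solution is itself $1^n$, so the stored previous first bit is $1\neq0$ and none of these generations is optimal. The current solution first differs from $1^n$ in the generation $g$ where the selected offspring satisfies $\tilde X^g\neq 1^n$; because an offspring equal to $1^n$ uniquely maximises the number of ones and is therefore always selected when it occurs, this is exactly the event that none of the $\lambda$ offspring equals $1^n$, which by~\eqref{eq:probupper} has probability $p:=\bigl(1-(1-1/n)^n\bigr)^{\lambda}\le\bigl(\tfrac{2e-1}{2e}\bigr)^{\lambda}$. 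Hence the number of generations for which the solution stays at $1^n$ is geometric with parameter $p$, of expectation $1/p\ge\bigl(\tfrac{2e}{2e-1}\bigr)^{\lambda}$; this conditional expectation is unaffected by the conditioning, since the offspring generated from generation $g_1$ onward are independent of $X^{g_1-1}$. Thus $g^*$ is at least this geometric variable, so $\mathrm{E}[g^*\mid X^{g_1-1}_1=1]\ge\bigl(\tfrac{2e}{2e-1}\bigr)^{\lambda}$.

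Finally, since every generation evaluates exactly $\lambda$ offspring, the number $T$ of fitness evaluations is at least $\lambda$ times the number of generations before the optimum, and combining the two estimates yields $\mathrm{E}[T]\ \ge\ \tfrac12\,\lambda\bigl(\tfrac{2e}{2e-1}\bigr)^{\lambda}=\Omega\!\left(\lambda\bigl(\tfrac{2e}{2e-1}\bigr)^{\lambda}\right)$. The only step needing genuine care is the constant-probability claim $\Pr[X^{g_1-1}_1=1]\ge\tfrac12$ at the \emph{random} time $g_1$: one has to verify that conditioning on the value of $g_1$ and on the remaining bits of $X^{g_1-1}$ does not push the first coordinate below $\tfrac12$, which is precisely the content of the stochastic-domination remark preceding the theorem — in Algorithm~\ref{alg:oclea} offspring are compared only through their number of ones (the $-nX^g_1$ term cancels), so a $1$ in the first position is never penalised by selection. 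Everything else is a routine geometric-tail computation.
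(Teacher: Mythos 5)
Your proposal is correct and follows essentially the same route as the paper: it uses the stochastic-domination remark to argue that, at the first generation where the current solution is $1^n$, the stored first bit equals $1$ with probability at least $\tfrac12$, and then bounds the per-generation probability of leaving the state $1^n$ by $\left(\tfrac{2e-1}{2e}\right)^{\lambda}$ exactly as in (\ref{eq:probupper}), giving an expected waiting time of at least $\left(\tfrac{2e}{2e-1}\right)^{\lambda}$ generations and hence $\Omega\left(\lambda\left(\tfrac{2e}{2e-1}\right)^{\lambda}\right)$ evaluations. Your explicit handling of why the optimum cannot occur while the solution stays at $1^n$ and of the stopping-time conditioning is a slightly more careful write-up of the same argument.
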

By Theorem~\ref{thm:lower}, we could easily see that to efficiently (in polynomial time) solve the \omt problem we need the offspring size $\lambda =O(\log n)$.

It is not difficult to see that before the $(1(1^n))$ case happens, the process of the \oclea optimizing the \omt problem is identical to the process of the classic (non-time-linkage) \oclea optimizing the \om problem. Hence, here we take the existing result for the classic \oclea on the \om problem from~\cite{RoweS14}.
\begin{theorem}[\cite{RoweS14}] Consider using the (non-time-linkage) \oclea to solve some $n$-dimensional problem,
\begin{itemize}
\item[(a)] If $\lambda \ge \log_{\tfrac{e}{e-1}}n$, then the expected number of function evaluations for the \oclea solving the \om problem is $O(n\log n+n\lambda)$;
\item[(b)] If $\lambda \le (1-\epsilon)\log_{\tfrac{e}{e-1}}n$ for some $\epsilon \in (0,1]$, the for any function with a unique global optimum, with probability $1-2^{-\Omega(n^{\epsilon/2})}$, the \oclea needs at least $2^{cn^{\epsilon/2}}$ running time for some constant $c>0$.
\end{itemize}
\label{thm:om}
\end{theorem}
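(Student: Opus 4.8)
Since this is quoted from~\cite{RoweS14}, the plan is only to sketch the argument one would use. Everything hinges on a single quantity: the probability that a generation is a \emph{fallback}, i.e.\ that the best of the $\lambda$ offspring is strictly worse than the parent. Because mutation flips each bit independently with probability $1/n$, a single offspring of a non-optimal parent is worse than its parent only if it flips at least one of its one-bits, which has probability at most $1-(1-1/n)^{n-1}\le(e-1)/e$; hence a whole generation is a fallback with probability at most $\big(1-(1-1/n)^{n-1}\big)^{\lambda}\le\big((e-1)/e\big)^{\lambda}$, and this is at most $1/n$ exactly when $\lambda\ge\log_{e/(e-1)}n$. That identity is why the two regimes of the theorem meet at $\log_{e/(e-1)}n$. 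In both parts I would track $Z_t$: in~(a) the number of zero-bits of the current search point (we are optimising \om), and in~(b) the Hamming distance of the current search point to the unique optimum $x^{*}$.

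For part~(a) the plan is to combine two estimates and feed them into drift theorems. First, \emph{fallback is cheap}: for $\lambda\ge\log_{e/(e-1)}n$ one has $\Pr[Z_{t+1}>Z_t]\le 1/n$, and, refining this via $\Pr[Z_{t+1}-Z_t\ge j]\le(1/j!)^{\lambda}$, also $E[(Z_{t+1}-Z_t)^{+}]=O(1/n)$. Second, \emph{progress is steady}: if $Z_t=k\ge1$, the best offspring improves by at least one bit with probability at least $1-\big(1-\tfrac{k}{n}(1-\tfrac1n)^{n-1}\big)^{\lambda}\ge 1-e^{-\lambda k/(en)}$, which is $\Omega(1)$ while $k\ge n/\lambda$ and at least $\tfrac{\lambda k}{2en}$ once $k\le n/\lambda$. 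Subtracting the negligible fallback contribution, the drift of $Z_t$ is $\Omega(1)$ down to the level $n/\lambda$ and of multiplicative type with rate $\Omega(\lambda/n)$ below it; additive drift then bounds the first phase by $O(n)$ generations and multiplicative drift bounds the second by $O\!\big(\tfrac{n}{\lambda}\log\tfrac{n}{\lambda}\big)=O\!\big(\tfrac{n\log n}{\lambda}\big)$ generations, so $O\!\big(n+\tfrac{n\log n}{\lambda}\big)$ generations in total; multiplying by the $\lambda$ evaluations per generation gives $O(n\log n+n\lambda)$. For the subcritical lower tail asserted in~(a) one just specialises the argument of~(b) to \om.

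For part~(b) the plan is a negative-drift argument for $Z_t$ on the band $B=[1,m]$ with $m=\Theta(n^{\epsilon/2})$. The point that makes the claim hold for an \emph{arbitrary} function with a unique optimum is the symmetry of standard bit mutation: if a single offspring flips at least one correct bit and no incorrect bit, it is strictly farther from $x^{*}$ than its parent no matter what $f$ is; for $Z_t\le m$ this has probability $1-1/e-o(1)$ for a fixed offspring, hence probability at least $(1-1/e-o(1))^{\lambda}\ge n^{-(1-\epsilon)-o(1)}$ for \emph{all} $\lambda$ offspring simultaneously, and then necessarily $Z_{t+1}>Z_t$. Conversely, decreasing $Z_t$ forces some offspring to flip an incorrect bit, which by a union bound has probability at most $\lambda Z_t/n\le\lambda n^{\epsilon/2-1}$. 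Since $n^{\epsilon/2}\gg\lambda$, throughout $B$ the ratio of the ``toward-$x^{*}$'' probability to the ``away-from-$x^{*}$'' probability is at most $(1+o(1))\lambda n^{-\epsilon/2}=o(1)$; a gambler's-ruin estimate then bounds the probability that a single excursion of $Z_t$ into $B$ reaches $0$ by $\big(O(\lambda n^{-\epsilon/2})\big)^{m}=2^{-\Omega(n^{\epsilon/2}\log n)}$. As $Z_0>m$ with probability $1-2^{-\Omega(n)}$ by a Chernoff bound and a single generation changes $Z_t$ by $O(\log n)\ll m$ with overwhelming probability (so $B$ cannot be jumped over), a union bound over the first $2^{cn^{\epsilon/2}}$ generations shows that $x^{*}$ (equivalently $Z_t=0$) is not reached within that many generations with probability $1-2^{-\Omega(n^{\epsilon/2})}$. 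One may replace the gambler's-ruin step by a negative-drift theorem with exponential weights to absorb the rare large jumps cleanly.

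The step I expect to be the real work is making the drift estimates hold \emph{uniformly in the level} and \emph{with the correct constant} $(e-1)/e$. In~(a) the improvement bound has to be pushed over the whole range of $k$, including $k=\Theta(n)$ where the best of $\lambda$ offspring is only mildly better than one, and the fallback term must be shown negligible against it for every $k\ge1$: this is precisely where $\lambda\ge\log_{e/(e-1)}n$ is used. In~(b) the delicate point is to pin the per-offspring ``distance increases'' probability at $1-1/e-o(1)$ for \emph{all} $Z_t\le n^{\epsilon/2}$, so that $((e-1)/e)^{\lambda}\ge n^{-(1-\epsilon)}$ genuinely lower-bounds the away-probability, and then to verify the step-size hypotheses the drift (or gambler's-ruin) conclusion needs. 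With those in hand, both halves of the theorem follow.
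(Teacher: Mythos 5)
This theorem is quoted verbatim from Rowe and Sudholt~\cite{RoweS14}; the paper gives no proof of its own, so there is nothing internal to compare against. Your sketch faithfully reconstructs the standard argument from that reference -- the threshold $\lambda=\log_{e/(e-1)}n$ arising from the fallback probability $((e-1)/e)^{\lambda}$ versus $1/n$, drift on the number of zero-bits for part~(a), and for part~(b) a negative-drift/gambler's-ruin argument on the Hamming distance to the unique optimum, using that comma selection must accept an offspring so that all $\lambda$ offspring moving away forces $Z_{t+1}>Z_t$ regardless of $f$ -- and I see no gap in it at the level of detail appropriate for a cited result.
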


Now we consider that the $(1(1^n))$ case happens for the first time at generation $g$. We could pessimistically let the first offspring $\tilde{X}^{(1)g}$ flip only the first bit, the other $\lambda-1$ offspring change by at least one bit, and $\tilde{X}^{g}=\tilde{X}^{(1)g}$, which happens with the probability at least
\begin{equation}
\frac{1}{\lambda} \frac 1n \left(1-\frac 1n\right)^{n-1}\left(1-\left(1-\frac 1n\right)^{n}\right)^{\lambda -1},
\label{eq:problower}
\end{equation}
where $1/\lambda$ stems from the lower bound on the probability to select the first individual, if all others have only one bit flipped.
Hence, condition on the event that $X^{g+1}\ne X^g$, which has the probability of $(1-(1-\frac1n)^n)^{\lambda}$, with probability at least 
\begin{align*}
\frac{1}{\lambda} \frac 1n \left(1-\frac 1n\right)^{n-1}&\left(1-\left(1-\frac 1n\right)^{n}\right)^{\lambda -1}\cdot \frac{1}{(1-(1-\frac1n)^n)^{\lambda}} \\
\ge & \frac{1}{en\lambda (1-(1-\frac1n)^n)},
\end{align*}
$\tilde{X}^g=(01^{n-1})$ happens, otherwise we move to the process identical to the classic \oclea solving the \om problem, which requires $O(n\log n/\lambda+n)$ generations to back to the second stagnation case or the optimum is reached via Theorem~\ref{thm:om}~$(a)$ for $\lambda\ge \log_{\tfrac{e}{e-1}}n$. Using the Wald’s equation, we know that the expected generations to reach $\tilde{X}^g=(01^{n-1})$ for a certain $g$ or that the optimum is reached is 
\[
O\left(en\lambda \left(1-\left(1-\frac1n\right)^n\right)\left(\frac{n\log n}{\lambda}+n\right)\right)=O\left(en^2\lambda\left(1-\left(1-\frac1n\right)^n\right)\right).
\]
Together with the expected generations that $X^{g+1}\ne X^g$ happens, we know that we need
\begin{align*}
O\left(\frac{en^2\lambda}{ \left(1-\left(1-\frac1n\right)^n\right)^{\lambda-1}}\right)&=O\left(en^2\lambda \left(\frac{e}{e-1}\right)^{\lambda-1}\right)\\
&=O\left((e-1)n^2\lambda \left(\frac{e}{e-1}\right)^{\lambda}\right)
\end{align*}
expected generations to reach $\tilde{X}^g=(01^{n-1})$ for a certain $g$ or that the optimum is reached. 
Once $X^{g+1}=\tilde{X}^g=(01^{n-1})$ is obtained before the optimum is reached, we know that with probability at least
\begin{align*}
1-\bigg(1-\frac 1n &\left(1-\frac 1n\right)^{n-1}\bigg)^{\lambda}
\ge 1-\left(1-\frac{1}{ne}\right)^{\lambda}\\
&\ge 1-\frac{1}{1+\frac{\lambda}{ne}}=\frac{\lambda}{ne+\lambda},
\end{align*}
we have $\tilde{X}^{g+1}=1^n$, that is, the optimum is found. Again with Wald's equation, we know the expected generations for the \oclea to find the optimum is $O\left((e-1)\frac{\lambda+ne}{\lambda}n^2\log n \left(\frac{e}{e-1}\right)^{\lambda}\right)$. Since in each generation, $\lambda$ offspring are evaluated, together with our requirement of $\lambda=O(\log n)$ for a polynomial runtime by Theorem~\ref{thm:lower}, we have the following runtime result.
\begin{theorem}
Let $n\in N$. The expected number of fitness evaluations for the \oclea with $\lambda = c\log_{\tfrac{e}{e-1}}n$ for any constant $c\ge 1$ on the \omt problem is $O(n^{3+c}\log n)$.
\label{thm:oclea}
\end{theorem}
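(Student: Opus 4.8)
The plan rests on two observations. First, since the current pair's fitness never enters the selection (Step~5 of Algorithm~\ref{alg:oclea}) and, for a fixed parent $X^{g}$, the selection key $f(X^{g},\cdot)$ differs from $\sum_{i}(\cdot)_{i}$ only by the parent-dependent constant $-nX^{g}_{1}$, the sequence of current solutions produced by the \oclea on \omt is distributed exactly as that of the classical \oclea on \om; the time-linkage enters only through the stopping condition ``current solution $=1^{n}$ \emph{and} stored previous first bit $=0$''. Hence Theorem~\ref{thm:om}(a) applies to every stretch between visits to $1^{n}$, because $\lambda=c\log_{\tfrac{e}{e-1}}n\ge\log_{\tfrac{e}{e-1}}n$. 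Second, this choice of $\lambda$ gives $\lambda=\Theta(\log n)$ and $(\tfrac{e}{e-1})^{\lambda}=n^{c}$, so the one ``waiting time exponential in $\lambda$'' that appears (the $p^{-1}$ below) is in fact a polynomial factor $n^{c}$. (A smaller $\lambda$ is excluded by Theorem~\ref{thm:lower}, and Theorem~\ref{thm:prob1} already gives convergence with probability $1$, so only the expectation is at stake.)

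Concretely, I would decompose a run as follows. \emph{Initial phase}: from the random start until the current solution first equals $1^{n}$; here Theorem~\ref{thm:om}(a) gives $O(n\log n+n\lambda)=O(n\log n)$ fitness evaluations, and at its end we are either at the optimum (stored previous first bit $=0$) or in the $(1(1^{n}))$ state, on which I condition. \emph{Escape phase}: from $(1(1^{n}))$, the current solution changes only when all $\lambda$ offspring flip at least one bit, an event of probability $p:=(1-(1-1/n)^{n})^{\lambda}$, so the expected wait is $p^{-1}\le(\tfrac{e}{e-1})^{\lambda}=n^{c}$; conditioned on such a change, inequalities~(\ref{eq:problower})--(\ref{eq:probupper}) give probability $\Omega(1/(n\lambda))$ of landing on the current solution $(01^{n-1})$, and otherwise the next point is arbitrary, whence (Theorem~\ref{thm:om}(a), applied to the \om-distributed current-solution process) it returns to $1^{n}$, i.e. to $(1(1^{n}))$ or to the optimum, within $O(n\log n/\lambda+n)$ generations. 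As each return to $(1(1^{n}))$ is an independent trial with success probability $\Omega(1/(n\lambda))$, a first Wald's-equation argument bounds the expected number of generations to reach $(01^{n-1})$ (or the optimum) by $O(n\lambda)$ trials times the per-trial cost; bounding that cost pessimistically, as in the displayed estimates preceding the theorem, yields $O(n^{2}\lambda\,(\tfrac{e}{e-1})^{\lambda})$ generations. Finally, from $(01^{n-1})$ the next offspring equals $1^{n}$ — the optimum, the stored previous first bit being $0$ — with probability $\ge\lambda/(ne+\lambda)=\Omega(\lambda/n)$, so a second Wald's-equation argument multiplies the previous bound by $O(n/\lambda)$, giving $O(n^{3}(\tfrac{e}{e-1})^{\lambda})=O(n^{3+c})$ generations and hence $O(\lambda\,n^{3+c})=O(n^{3+c}\log n)$ fitness evaluations; adding the $O(n\log n)$ of the initial phase completes the proof.

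The step I expect to be the real work is not any single estimate but the clean bookkeeping of the \emph{three nested restart structures} — waiting at $(1(1^{n}))$ for a change; either landing on $(01^{n-1})$ or being thrown back through an \om-type excursion; and then either launching to $1^{n}$ or being thrown back again — with the verification that at each level the restart is genuinely independent of the past and the one-step escape probabilities ($\Omega(1/(n\lambda))$ to reach $(01^{n-1})$, then $\Omega(\lambda/n)$ to reach $1^{n}$) are bounded below uniformly over the possible intermediate states. The only point where the precise value of $\lambda$ is indispensable is turning the exponential wait $p^{-1}\le(\tfrac{e}{e-1})^{\lambda}$ into the polynomial $n^{c}$; a larger $\lambda$ would merely enlarge this factor.
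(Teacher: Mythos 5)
Your proposal is correct and follows essentially the same route as the paper's proof: reduce the initial phase and each failed-escape excursion to the classical \oclea on \om via Theorem~\ref{thm:om}(a), wait at $(1(1^n))$ for an all-offspring change costing at most $(\tfrac{e}{e-1})^{\lambda}=n^{c}$ expected generations, use the conditional probability $\Omega(1/(n\lambda))$ of producing $(01^{n-1})$ and then $\Omega(\lambda/(ne))$ of sampling $1^n$, and combine the levels with two applications of Wald's equation. Your accounting of the nested restarts is, if anything, slightly tighter than the paper's pessimistic multiplication of waiting and excursion costs, but it yields the same $O(n^{3+c}\log n)$ bound on the number of fitness evaluations.
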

From the above analyses, we could see the efficient range of the offspring size is $\lambda = c\log_{\tfrac{e}{e-1}}n$ for any constant $c\ge 1$, since $\lambda=\omega(\log n)$ will make the probability of escaping the $(1(1^n))$ local optimum very small and will result in a super-exponential runtime by Theorem~\ref{thm:lower}, and $\lambda < \log_{\frac{e}{e-1}}n$ will result in exponential runtime by Theorem~\ref{thm:om}~(b).

\section{Beyond the Comma Selection}
\label{sec:beyond}
\subsection{The cGA}
By Theorem~\ref{thm:lower} and Theorem~\ref{thm:om}~$(b)$, we could easily see that although the smaller offspring size is beneficial in escaping from the local optimum, too small offspring size $\lambda \le  (1-\epsilon)\log_{\frac{e}{e-1}}n$ will result in the exponential time in reaching the global optimum of the \om problem, and thus the \omt problem. 
An ideal way is to maintain the good ability of escaping the local optimum as well as an efficient searching ability over the \om problem. Recalling that the cGA utilizes only two samples to update the probabilistic model and it has achieved good performance among many pseudo-Boolean problems~\cite{FriedrichKKS17,Doerr20algo}, we  analyze its performance on the \omt problem. 

As mentioned in Section~\ref{subsec:omt}, this paper only discusses solving the \omt problem in the offline mode with $n$-bit string encoding for the current search point and storing the previous solution for the fitness evaluation. The cGA algorithm for the problem requiring two consecutive time steps is similar to the original cGA, and the only difference is that the winner of the two samples (if two samples have the same fitness, we accept the first one~\cite{LenglerSW20}, and call it the winner) in the last step is stored for the fitness evaluation. In the following, we still call it the cGA. The pseudo-code is shown in Algorithm~\ref{alg:cGA}. The global optimum is that ${X}^{g+1}=1^n$ condition on $X_1^g=0$ for some generation $g\ge0$. 
\begin{algorithm}[!ht]
\caption{The cGA to maximize fitness function $f$ requiring two consecutive time steps}
{\small
 \begin{algorithmic}[1]
 \STATE{$p^0=(\tfrac{1}{2}, \tfrac{1}{2},\dots,\tfrac{1}{2})\in [0,1]^n$, and sample $X^0$ based on $p^0$}
 \FOR {$g=1,2,\dots$}
 \STATE {Independently sample two individuals $X^{(1)g}$ and $X^{(2)g}$ based on $p^{g-1}$}
 \STATEx {$\quad\%\%$\textsl{Update the frequency vector and store the winner between $X^{(1)g}$ and $X^{(2)g}$}}
 \IF{$f(X^{g-1},X^{(1)g}) \ge f(X^{g-1},X^{(2)g})$}
 \STATE {$p'=p^{g-1}+\tfrac{1}{\mu}(X^{(1)g}-X^{(2)g})$};
 \STATE {$X^g=X^{(1)g}$};
 \ELSE 
  \STATE {$p'=p^{g-1}+\tfrac{1}{\mu}(X^{(2)g}-X^{(1)g})$};
   \STATE {$X^g=X^{(2)g}$};
  \ENDIF
 \STATE {$p^g=\min \{\max\{\tfrac{1}{n},p'\},1-\tfrac{1}{n}\}$};
 \ENDFOR
 \end{algorithmic}
 \label{alg:cGA}
}
\end{algorithm}

\subsection{Runtime Analysis}\label{subsec:cga}
Similar to the \oclea discussed in Section~\ref{sec:oclea}, with probability of $1$, the cGA can find the optimum of the \omt problem due to its non-elitism. The search process of the cGA on the \omt problem is identical to the original (non-time-linkage) cGA on the \om problem. But once $(1^n)$, the optimum of the \om problem, is reached, the non-time-linkage theory is not interested in the process afterwards as they only focus on the first hitting time of $(1^n)$ while for the cGA on the \omt problem, the optimum is $(1^n)$ is reached condition on the stored solution with the first bit value of $0$, and there is a good chance that the stored solution has $1$ for the first bit and we need to focus on the process afterwards. The following theorem from~\cite{SudholtW19} shows the runtime for the cGA on the \om problem, which will also be used for our runtime analysis for the \omt problem.
\begin{theorem}[\cite{SudholtW19}]
Consider using the (non-time-linkage) cGA to solve the $n$-dimensional \om problem,
for $\mu\ge c\sqrt n \log n$ with $c>0$ sufficiently large, and $\mu \in \Poly(n)$, the expected runtime is $O(\sqrt n \mu)$.
\label{thm:cgaom}
\end{theorem}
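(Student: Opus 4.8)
The plan is to run the drift analysis behind this result (following~\cite{SudholtW19}, which builds on~\cite{Droste06}). Write $p^g=(p_1^g,\dots,p_n^g)$ for the frequency vector after generation $g$; each coordinate starts at $1/2$, moves by at most $1/\mu$ per generation, and is clamped to $[1/n,1-1/n]$. I would show: (i) conditioned on a ``good event'' that holds with probability $1-n^{-\Omega(1)}$, after $O(\mu\sqrt n)$ generations the potential $\varphi(p):=\sum_i(1-p_i)$ has dropped to $O(1)$; (ii) once $\varphi=O(1)$ the string $1^n$ — the \om optimum — is sampled within $O(1)$ further generations, since then $\prod_i p_i=\Omega(1)$; (iii) on the complement of the good event one restarts the argument, losing only a constant factor in expectation. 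As each generation uses two evaluations, this yields $O(\mu\sqrt n)$ evaluations.

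First I would pin down the drift of a single coordinate. Fix a bit $i$ and call a generation \emph{relevant for $i$} if the two samples disagree at bit $i$ (probability $2p_i(1-p_i)$). Conditioned on a relevant generation, $p_i$ increases by $1/\mu$ iff the sample carrying a $1$ at bit $i$ has at least as many ones among the other $n-1$ bits as its competitor; those two other-bit counts are i.i.d.\ with standard deviation $\sigma:=\sqrt{\sum_{j\ne i}p_j(1-p_j)}$, and a short symmetry-plus-anticoncentration computation gives that this happens with probability $\tfrac12+\Theta(1/\sigma)$ — here the \om structure is essential, since the winner is the sample with more ones, so a $1$ at bit $i$ is strictly helpful. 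Hence the one-generation drift of $p_i$ is $\Theta(p_i(1-p_i)/(\mu\sigma))\ge 0$, so each $p_i^g$ is a submartingale. Summing over the non-saturated coordinates and using that on the good event $\sigma^2=\Theta(\varphi)$ (all frequencies being bounded below by a constant), I obtain an additive drift of $\varphi$ of order $\sqrt\varphi/\mu$ while $\varphi$ exceeds a constant — equivalently, $\sqrt\varphi$ decreases by $\Theta(1/\mu)$ per generation — so the variable drift theorem yields $O(\mu\sqrt n)$ generations to reach $\varphi=O(1)$, establishing (i).

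The crux, and the source of the requirement $\mu\ge c\sqrt n\log n$, is the step that establishes the good event: the drift estimates above are valid only as long as no $p_i$ has sunk toward the lower border $1/n$ (say, all $p_i\ge 1/3$), and a single coordinate, which takes $\Omega(\mu)$ unit steps of size $1/\mu$, is by itself not so unlikely to be driven down when $\mu$ is small. I would control this with a negative-drift / Hoeffding estimate: each $p_i^g$ has non-negative drift and increments bounded by $1/\mu$, so the probability that $p_i$ ever falls by a constant from $1/2$ within $\poly(n)$ generations is $2^{-\Omega(\mu/\sqrt n)}=n^{-\Omega(c)}$; since $\mu\in\Poly(n)$ the relevant time horizon is polynomial, and a union bound over the $n$ coordinates and that horizon makes the good event hold with probability $1-n^{-\Omega(c)}$ once $c$ is a large enough constant. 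On the complementary event one restarts the whole argument from the current state; the expected number of restarts is $1+o(1)$, so the expected number of generations (hence function evaluations) stays $O(\mu\sqrt n)$.

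The main obstacle is precisely this border-avoidance step: pushing the Hoeffding/negative-drift bound to be sharp enough that $\mu=\Theta(\sqrt n\log n)$ — and no larger — already forces the good event, while simultaneously tracking how $\sigma$, and thus every coordinate's drift, depends on how many frequencies have already saturated (so that the $\sqrt\varphi/\mu$ drift survives throughout the run, not just near the start when $\sigma=\Theta(\sqrt n)$). A secondary, more routine wrinkle is that $\varphi$ is not monotone — a frequency, and in particular a saturated one, can occasionally move down — so the variable drift theorem has to be invoked in a form that tolerates bounded upward jumps of the potential, or, equivalently, these small jumps can be absorbed into the constants after conditioning on the good event.
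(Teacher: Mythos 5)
This theorem is quoted directly from Sudholt and Witt \cite{SudholtW19}; the paper gives no proof of it, so there is no internal argument to compare against. Your sketch follows essentially the same route as that reference --- per-bit drift of $\tfrac12+\Theta(1/\sigma)$ conditioned on the two samples disagreeing at the bit, the potential $\varphi=\sum_i(1-p_i)$ with drift $\Theta(\sqrt{\varphi}/\mu)$, and border avoidance via a negative-drift bound of order $\exp(-\Omega(\mu/\sqrt n))$, which is exactly where the condition $\mu\ge c\sqrt n\log n$ enters --- and it is sound at the level of a sketch, the only thin spot being the restart step from a possibly degenerate post-failure state (one needs an unconditional polynomial bound, e.g.\ via the weaker drift $\Omega(\sigma/\mu)$ that holds without the good event, rather than the glib ``$1+o(1)$ restarts'' claim), a point you partly acknowledge.
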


Now we establish our runtime result for the cGA on the time-linkage \omt problem.
\begin{theorem}
Let $n\in\N_{\ge 2}$. Consider using the cGA with population size $\mu=\Omega(\sqrt n\log n) \cap \Poly(n)$ to solve the $n$-dimensional \omt problem. The expected number of fitness evaluations is $O(n^{2.5} \mu)$. In particular, this runtime is $O(n^3\log n)$ when the population size $\mu=\Theta(\sqrt n\log n)$.
\label{thm:cga}
\end{theorem}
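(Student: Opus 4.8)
The plan is to reduce the dynamics of the cGA on \omt to those of the ordinary cGA on \om, invoke the \om runtime of Theorem~\ref{thm:cgaom}, and then supply a self-contained argument showing how non-elitism lets the algorithm escape the stored-bit obstruction that trapped the \oplea. The key preliminary observation is a \emph{reduction}: in Step~4 of Algorithm~\ref{alg:cGA} the two samples are compared via $f(X^{g-1},X^{(1)g})$ versus $f(X^{g-1},X^{(2)g})$, and since $f(X^{g-1},x)=\sum_i x_i-nX_1^{g-1}$, the term $-nX_1^{g-1}$ is identical on both sides and cancels; the comparison is therefore exactly ``which sample has more one-bits''. Hence the joint law of the frequency sequence $(p^g)$ together with the winner sequence $(X^g)$ produced by the cGA on \omt is identical to the one produced by the ordinary (non-time-linkage) cGA on \om. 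The only difference is the stopping criterion: instead of the first $g$ with $X^g=1^n$ we want the first $g$ with $X^g=1^n$ \emph{and} $X^{g-1}_1=0$.

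Call a frequency vector \emph{good} if $\sum_{i=1}^n(1-p_i)=O(1)$, so that a single sample equals $1^n$ with probability $\prod_i p_i=\Omega(1)$ and its first bit is $0$ with probability $\Theta(1/n)$. By Theorem~\ref{thm:cgaom} and the potential/drift analysis underlying it, for $\mu=\Omega(\sqrt n\log n)$ the cGA on \om reaches a good configuration within expected $O(\sqrt n\mu)$ fitness evaluations, and during this time no frequency drops to the lower border $1/n$ --- this ``no genetic drift'' behaviour is precisely what $\mu=\Omega(\sqrt n\log n)$ buys. The same guarantee keeps the vector good for a long stretch afterwards, and if it ever leaves the good region a further application of Theorem~\ref{thm:cgaom} brings it back within $O(\sqrt n\mu)$ evaluations.

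Working inside a good configuration one estimates the probability that the \omt-optimum appears in a given generation. The winner $X^g$ has $X^g_1=0$ with probability $\Omega(1/n)$: it is enough that some sample is drawn with first bit $0$ and all remaining bits $1$, which then has the maximal number of ones and wins. Conditioned on that, the new vector $p^g$ differs from $p^{g-1}$ in only $O(1)$ coordinates, each by $\pm 1/\mu$, hence is still good, so the next generation yields $X^{g+1}=1^n$ with probability $\Omega(\prod_i p_i^g)=\Omega(1)$; therefore $\Pr[X^g_1=0\text{ and }X^{g+1}=1^n]=\Omega(1/n)$ per good generation. Combining this escape estimate with the $O(\sqrt n\mu)$ cost of (re)entering the good region and with a (pessimistic) accounting of how many such re-optimisation rounds and intervening generations are charged --- roughly $O(n^2)$ rounds at $O(\sqrt n\mu)$ each --- yields $O(n^2\cdot\sqrt n\mu)=O(n^{2.5}\mu)$ fitness evaluations, which is $O(n^3\log n)$ when $\mu=\Theta(\sqrt n\log n)$. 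The probability-$1$ statement that the optimum is eventually found is, as for the \oclea, immediate from non-elitism.

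The genuinely delicate step is the second one. Standard cGA analyses terminate at the first hitting time of $1^n$, whereas here the frequencies keep being updated, and a priori genetic drift could carry some $p_i$ (notably $p_1$) away from the upper border, destroying either the $\Omega(1)$ probability of re-sampling $1^n$ or the $\Omega(1/n)$ probability of producing a first-bit-$0$ winner. Pinning down that $\mu=\Omega(\sqrt n\log n)$ keeps the frequency vector in the good region long enough, and that any excursion out of it is repaired quickly, is where the real work lies (essentially extending the Sudholt--Witt / Droste potential arguments past the first optimum); once that is in place, the escape estimate and the bookkeeping leading to $O(n^{2.5}\mu)$ are routine.
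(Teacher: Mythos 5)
Your overall strategy coincides with the paper's: note that the stored term $-nX_1^{g-1}$ cancels in the pairwise comparison, so the frequency dynamics are those of the ordinary cGA on \om; use Theorem~\ref{thm:cgaom} (and the underlying analysis of~\cite{SudholtW19}) to reach, in expected $O(\sqrt n\mu)$ evaluations, a state where all frequencies are high; then argue a polynomially small probability per round of producing a winner with first bit $0$ followed immediately by a sample $1^n$; and finish with a restart/Wald accounting. The difference lies in the middle step, and that is exactly where your proposal is not yet a proof. The paper waits until \emph{every} frequency sits at the border $1-\frac1n$ (asserting from the proof in~\cite{SudholtW19} that this costs $O(\sqrt n\mu)$) and then conditions on the event that \emph{both} offspring equal $(01^{n-1})$, which has probability at least $\frac{1}{e^2n^2}$. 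The point of this seemingly wasteful choice is that two identical samples leave the frequency vector unchanged, so the $\Omega(1)$ probability of sampling $1^n$ in the very next generation is immediate, and no statement about frequency maintenance or genetic drift \emph{after} the first optimum is needed. Your looser ``good region'' together with the sharper $\Omega(1/n)$ escape estimate would in principle give an even better bound, but it rests on the claim you explicitly defer --- that $\mu=\Omega(\sqrt n\log n)$ keeps the vector good past the first hitting of $1^n$ and that excursions are repaired within $O(\sqrt n\mu)$ --- and Theorem~\ref{thm:cgaom} as stated (first hitting time of $1^n$ from the uniform initialization) does not deliver this; you would have to extend the potential argument yourself, which is precisely the work the paper's device sidesteps.

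Two smaller points. First, your bookkeeping is internally inconsistent: with an $\Omega(1/n)$ success probability per attempt, the restart argument charges $O(n)$ re-optimization rounds, i.e.\ $O(n^{1.5}\mu)$, not the $O(n^2)$ rounds you quote; being pessimistic is harmless for the stated $O(n^{2.5}\mu)$ bound, but it signals the accounting was not pinned down. Second, your claim that after a first-bit-$0$ winner the frequency vector changes in only $O(1)$ coordinates holds in expectation, not deterministically, so it needs a Markov-type argument (or, again, the paper's identical-samples event, under which nothing changes at all). Adopting the paper's event --- all frequencies at the border, both offspring $(01^{n-1})$, success probability $\Omega(1/n^2)$ per round of cost $O(\sqrt n\mu)$, then Wald --- turns your outline into a complete proof of the stated bound.
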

\begin{proof}
From the proof in~\cite{SudholtW19}, $O(\sqrt n \mu)$ is the expected runtime for the all dimensions of the frequency $p$ become $1-\frac1n$. 
In the following, we start from this state.

If $X^{g}_1=1$ when all frequency borders $1-\frac 1n$ are reached at a certain generation $g$, in order to reach the optimum of the \omt problem, we require $X^{g+1}_1=0$, that can be easily satisfied by sampling $X^{(1)g+1}=X^{(2)g+1}=(01^{n-1})$, which happens with probability
\begin{align}
\left(\frac{1}{n}\left(1-\frac{1}n\right)^{n-1}\right)^2\ge \frac1{e^{2}n^2}.
\label{eq:0}
\end{align}
Once $X^{g+1}_1=0$, since $X^{(1)g+1}=X^{(2)g+1}=(01^{n-1})$ we know that the frequency $p^{g+1}$ stays at $(1-\frac 1n,\dots,1-\frac 1n)$. Hence, with probability 
\begin{equation}
\begin{split}
1-\left(1-\left(1-\frac{1}{n}\right)^n\right)^2 \ge &1-\left(1-\frac{1}{e} \left(1-\frac{1}{n}\right)\right)^2 
\ge \frac{4e-1}{4e^2},
\end{split}
\label{eq:1n}
\end{equation}
where the last inequality uses $n\ge 2$,
at least one of the two samples becomes $1^n$ and the optimum is reached. Otherwise, we will need to again wait some $g''>g'+1$ that $p^{g''}=(1-\frac 1n,\dots,1-\frac 1n)$, which requires $O(\mu\sqrt n)$ by Theorem~\ref{thm:cgaom}.
By (\ref{eq:0}) and (\ref{eq:1n}), we know that the probability to reach the optimum is at least 
$
\frac1{e^{2}n^2} \frac{4e-1}{4e^2}=\frac{4e-1}{4e^4n^2}.
$
With Wald's equation, we know that the expected generations for the cGA to reach the optimum of the \omt problem is at most $\frac{4e^4}{4e-1}n^2O(\mu\sqrt n)=O(n^{2.5}\mu)$.
\end{proof}
\subsection{Some Notes}
From Sections~\ref{sec:oclea} and~\ref{subsec:cga}, we see the efficiency of the non-elitist algorithms. Although non-elitist algorithms, in which the inferior solution has the chance to enter into the next generation, can ensure the probability of $1$ to reach the optimum of the \omt problem as discussed in Section~\ref{sec:oclea}, it is not true that all non-elitist algorithms can solve \omt as efficiently as the \oclea and the cGA. One example is the Metropolis algorithm~\cite{MetropolisRRTT53}, which is identical to the randomized local search but the inferior solution $y$ generated from $x$ can enter into the next generation with probability $e^{\alpha(f(y)-f(x))}$ where $\alpha > 0$ is a constant. The time-linkage version is similar to the original one but stores the previous solution for the fitness evaluation. We omit the details and just give a note. With probability of around $\frac14$, (0(1*)) local optimum is reached in the $1$st generation. For any generated offspring $\tilde{X}^1$ from $X^1=1*$, its fitness $f(X^1,\tilde{X}^1)\le0$, which with high probability is smaller than $f(X^1,\tilde{X}^1)$ by $\Theta(n)$, and thus has probability of $e^{-\Theta(n)}$ to accept this $\tilde{X}^1$ for possibly escaping the local optimum. Hence, it will require at least $e^{\Theta(n)}$ expected runtime to reach the optimum.

\section{Experiments}
\label{sec:exp}
In this section, we experimentally verify the efficiency of the non-elitist \oclea and cGA discussed in Sections~\ref{sec:oclea} and~\ref{sec:beyond}. As a comparison, we also collect the number of failed runs of the \oplea as Section~\ref{sec:oplea} points out its non-convergence. 

\subsection{Experimental Settings}
We perform 20 independent runs of each of the \oplea, the \oclea and the cGA to see their actual performance. The details of the experimental settings are listed in the following.
\begin{itemize}
\item Problem size $n$: $500,1000,1500,2000,2500,3000$.
\item Offspring size $\lambda$ of the \oclea: $\lceil \log_{\frac{e}{e-1}} n \rceil$ as suggested in Theorem~\ref{thm:oclea}.
\item Offspring size $\lambda$ of the \oplea: $\lceil \log_{\frac{e}{e-1}} n \rceil$, the same value for the \oclea. For the \oplea, if one of the two kinds of local optima is reached, the algorithm terminates.
\item Population size $\mu$ of the cGA: $2\lceil \tfrac14 \sqrt{n} \ln n \rceil$. $\Omega(\sqrt n \ln n)$ is suggested in~\cite{SudholtW19} for the \om problem to prevent the genetic drift effect~\cite{DoerrZ20tec,KrejcaW20}. Since $64$ is the optimal choice of $\mu$ for the $500$-dimensional \om problem in~\cite{DoerrZ20}, we choose $\tfrac 12$ as the proper coefficient to obtain a $\mu$ value close to $64$ for $n=500$, and further use $2\lceil \tfrac14 \sqrt{n} \ln n \rceil$ to make $\mu$ an even number, corresponding to the well-behaved frequency assumption usually made in theory works~\cite{Doerr19tcs}.
\end{itemize}

\subsection{Experimental Results}
\subsubsection{\oplea} For the $20$ independent runs of the \oplea on the \omt with all dimension sizes, $0$ run succeeded and all runs reached one of the two local optima. We note that by Theorem~\ref{thm:oplea}, the upper bound for the probability of a success run is $3n\left(\frac{2e^2}{n^{2/3}\ln \lambda}\right)^{\ln \lambda} +\frac{3\lambda}{n^{1/3}}+ne^{-\frac{n^{1/3}}{e}}$. Even for the item $\frac{3\lambda}{n^{1/3}}$, to require it less than $1$, we need $n$ at least the amount of $700,000$. Hence, it means that the non-convergence can be witnessed even for such medium sizes, which in addition verifies the bad ability of the \oplea to reach the optimum of the \omt problem.

\subsubsection{\oclea and cGA} Figure~\ref{fig:runtime} plots the median number of the fitness evaluations of the \oclea and cGA, together with the first and third quartiles. It is obvious to see that both algorithms can efficiently solve the \omt problem. Besides, the superiority of the cGA for the asymptotic complexity $O(n^3\log n)$ seen in Theorem~\ref{thm:cga} comparing with $O(n^4\log n)$ of the \oclea in Theorem~\ref{thm:oclea} can also be witnessed in the experimental results. 

\begin{figure}[!ht]
\centering
\includegraphics[width=3.9in]{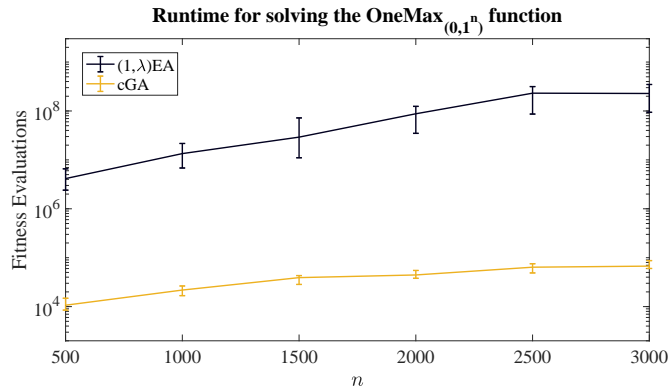}
\caption{The median number of fitness evaluations (with the $1$st and $3$rd quartiles) of the \oclea and the cGA on the \omt problem with problem size $n=500,1000,1500,2000,2500$ and $3000$ in $20$ independent runs.}
\label{fig:runtime}
\end{figure}

\section{Conclusion and Outlook}
\label{sec:con}
Many real-world applications have the time-linkage property, however, there is no theoretical work discussing how the non-elitist algorithms solve the time-linkage problem. The different preference between the current solution and the historical solutions could result in local optima that cause stagnation for some elitist algorithms when we just encode the current solution and store the historical ones for the fitness evaluation. In this paper, based on the recently proposed \omt problem, we proved that the elitist \oplea with a high probability cannot reach the global optimum. In contrast, we proved that non-elitism might help. We proved that the \oclea, the non-elitist counterpart of the \oplea, can reach the optimum in $O(n^4\log n)$ with $\lambda=\log_{\frac{e}{e-1}}n$. Inspired by that the smaller offspring size is helpful to escape from local optima, we resorted to the cGA, and proved its runtime of $O(n^3\log n)$ with population size $\mu=\Theta(\sqrt n \log n)$. Our experiments also verify the poor performance of the \oplea, and the efficiency of the \oclea and the cGA, especially the cGA. Our work provided a positive theoretical evidence for the usage of non-elitism, which is not much in the current evolutionary community. 

There are still many research questions remain open.  There are many future works. The currently analyzed \omt problem only considers one bit position of only the immediate previous time step, and only consider the weight of $-n$ for the previous first bit position. In the future, we will consider more generalized and practical model, like with other weights, more than one bit positions, and more than one historical time steps. The complicated model will result in complicated stochastic dependencies and thus difficult theoretical analysis. We might resort the mean-field analysis~\cite{DoerrZ20tcs} to obtain the approximate theoretical results, and we will consider the more specialized mathematical tools for the time-linkage problems.

\section*{Acknowledgments}
This work was supported by Guangdong Basic and Applied Basic Research Foundation (Grant No. 2019A1515110177), Guangdong Provincial Key Laboratory (Grant No. 2020B121201001), the Program for Guangdong Introducing Innovative and Enterpreneurial Teams (Grant No. 2017ZT07X386), Shenzhen Science and Technology Program (Grant No. KQTD2016112514355531).



}
\end{document}